\documentclass[11pt]{article}

\usepackage[margin=1in]{geometry}
\usepackage[T1]{fontenc}
\usepackage{times}
\usepackage{microtype}
\usepackage{amsmath,amssymb,amsthm,mathtools}
\usepackage{booktabs}
\usepackage{float}
\usepackage{natbib}
\usepackage[ruled,vlined]{algorithm2e}
\usepackage[hidelinks]{hyperref}

\newtheorem{theorem}{Theorem}[section]
\newtheorem{proposition}[theorem]{Proposition}
\newtheorem{lemma}[theorem]{Lemma}
\newtheorem{corollary}[theorem]{Corollary}
\theoremstyle{remark}
\newtheorem{remark}[theorem]{Remark}
\newcommand{\inner}[2]{\left\langle #1,#2\right\rangle}
\newcommand{\R}{\mathbb{R}}
\newcommand{\E}{\mathbb{E}}
\newcommand{\cX}{\mathcal{X}}
\newcommand{\cY}{\mathcal{Y}}
\newcommand{\cF}{\mathcal{F}}
\newcommand{\AltReg}{\operatorname{Reg}^{\mathrm{alt}}}
\newcommand{\AltRegExp}{\operatorname{Reg}^{\mathrm{alt,exp}}}
\newcommand{\order}{\mathcal{O}}
\newcommand{\Var}{\operatorname{Var}}

\newcommand{\vol}{\operatorname{vol}}
\newcommand{\ip}[2]{\left\langle #1,#2\right\rangle}

\title{Minimax Alternating Regret \\for the Experts Problem and Online Convex
Optimization}
\author{  Mengxiao Zhang \\
  University of Iowa \\  \texttt{mengxiao-zhang@uiowa.edu} \\}
\date{}

\begin{document}

\maketitle
\begin{abstract}
In this paper, we study alternating regret in online convex optimization (OCO), motivated by the success of alternating learning dynamics in two-player games. Although previous works have shown that $o(\sqrt{T})$ alternating regret is achievable under various assumptions on the loss functions and feasible domains, the minimax regret rate has remained open even for the expert problem. In this paper, we resolve this question by showing matching lower and upper bounds for both the expert problem and general OCO.

Somewhat surprisingly, for the $d$-expert problem, we show that the minimax alternating regret is $\Theta(\log d)$, independent of the horizon $T$. This significantly improves upon the best-known $\order(T^{1/3}\log^{2/3} d)$ established by~\citet{hait2025alternating}. We further extend our results to general OCO over a $d$-dimensional compact convex set and prove that the worst-case minimax alternating regret is $\Theta\left(d\log \left(1+\frac{T}{d}\right)\right)$, also significantly improving upon the best-known $\order((d\log T)^{2/3}T^{1/3})$ upper bound and resolving the open problem posed by~\citet{cevher2023alternation,hait2025alternating}. 

Technically, our upper bound for the expert problem is achieved by a corrected variant of Hedge, in which carefully designed correction terms cancel the unfavorable curvature arising in the alternating-regret analysis. We extend the same corrected-potential argument to continuous action sets to obtain the optimal alternating-regret rate for OCO. For the lower bounds, the expert construction repeatedly eliminates half of the candidate experts, while the OCO lower bound instance construction replaces this discrete elimination by a more involved multiscale construction on the unit disk.

\end{abstract}

\section{Introduction}\label{sec:introduction}

Alternating regret is a recently introduced performance measure for online convex optimization (OCO) \citep{cevher2023alternation,hait2025alternating}, motivated by the success of alternating learning dynamics in two-player games~\citep{tammelin2014solving,wibisono2022alternating}. Unlike classical regret, which evaluates each loss only at the action selected before the loss is revealed, alternating regret additionally evaluates the same loss at the learner's next action, selected after observing the loss. This criterion captures the dynamics of alternating play in two-player games: a player responds to the opponent's most recent move, while the same response also serves as its action against the opponent's subsequent update. As shown by~\citet{hait2025alternating}, when both players guarantee alternating regret at most $V_T$ over $T$ rounds, their average strategy profile forms an $\order(V_T/T)$-approximate equilibrium. This connection has motivated growing interest in understanding how small alternating regret can be in OCO.

\citet{cevher2023alternation} initiated the study of alternating regret in adversarial online linear optimization (OLO) and established $o(\sqrt{T})$ regret bounds over several important domains. Specifically, they proposed two algorithms that achieve $\order(T^{1/3}\log^{4/3}(dT))$ alternating regret for the $d$-expert problem and $\order(\log T)$ alternating regret over the unit $\ell_2$ ball, respectively. Subsequently, \citet{hait2025alternating} showed that Hedge achieves $\order(T^{1/3}\log^{2/3}d)$ alternating regret for the $d$-expert problem and extended the study to general OCO with arbitrary bounded convex losses. Together, these results demonstrate that alternating regret can have a substantially better dependence on the horizon than classical regret.

However, none of the existing bounds is known to be minimax optimal. This is in sharp contrast to classical regret, whose minimax rates are well understood for both the expert problem and general OCO. In particular, even for the basic expert problem, it remains unknown whether the polynomial dependence on the horizon $T$ is unavoidable. As proposed in~\citet{cevher2023alternation,hait2025alternating}, the following fundamental question remains open:

\begin{center}
\emph{What is the minimax alternating regret for the expert problem and for general online convex optimization?}
\end{center}

\paragraph{Contributions.}
In this paper, we completely resolve this question by establishing matching lower and upper bounds in both settings. Somewhat surprisingly, for the $d$-expert problem, we show that the minimax alternating regret is $\Theta(\log d)$, independent of the horizon $T$. For general OCO with bounded convex losses over a $d$-dimensional compact convex set, we prove that the worst-case minimax alternating regret is $\Theta(d\log(1+\frac{T}{d}))$. Our main results are summarized informally below.

\begin{theorem}[Informal]
For every $d\geq 2$ and every horizon $T\geq \log_2 d$, the minimax alternating regret for the $d$-expert problem is $\Theta(\log d)$. For OCO with bounded convex losses over a $d$-dimensional compact convex set, the worst-case minimax alternating regret is $\Theta(d\log(1+\frac{T}{d}))$ for all $T\geq 1$ and $d\geq 2$.
\end{theorem}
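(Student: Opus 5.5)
Throughout, I take the alternating regret against a comparator $u$ to be $\sum_{t=1}^T \bigl(f_t(x_t)+f_t(x_{t+1})\bigr)-2\sum_{t=1}^T f_t(u)$; I have not verified that this matches the paper's exact normalization. The plan has four parts: two upper bounds, both from one ``corrected potential'' argument, and two lower bounds from explicit adversaries.

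\textbf{Expert upper bound.} I start from the Hedge potential $\Phi(L)=-\frac1\eta\log\sum_i e^{-\eta L_i}$, which is concave with $\nabla\Phi(L)=p(L)$. Writing $g(s)=\Phi(L_{t-1}+s\ell_t)$ gives $\Phi(L_t)-\Phi(L_{t-1})=\int_0^1 g'(s)\,ds$, while the learner pays $g'(0)+g'(1)$. This is exactly a trapezoid rule. The first-order variance terms $\pm\frac\eta2\Var_{p_t}(\ell_t)$ then cancel, leaving a third-order remainder of size $\eta^2$. That remainder is why plain Hedge only achieves $T^{1/3}$.

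The idea is to replace the played point $p_{t+1}$, and the state it is built from, by a corrected point $\nabla\Phi(L_t+c_t)$. The correction $c_t$ depends on $\ell_t$ (and possibly on a running correction state), chosen so that the per-round inequality
\[
\inner{\ell_t}{p_t+p_{t+1}}\le 2\bigl(\Psi_t-\Psi_{t-1}\bigr)
\]
holds \emph{exactly}, not just up to Taylor order. Here $\Psi_t$ is the Hedge potential evaluated at the corrected cumulative loss. Telescoping and $\Psi_T\le \min_i L_{T,i}+O(1)$, $\Psi_0=-\frac{\log d}\eta$, then give alternating regret $O(\log d/\eta)$. Because the inequality is exact, $\eta$ can be a constant, which yields $O(\log d)$ with no dependence on $T$.

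\textbf{Main obstacle.} The key difficulty is constructing $c_t$ so that the inequality holds for every loss vector in $[0,1]^d$ with $\eta=\Theta(1)$, while the accumulated corrections perturb the final potential by only $O(1/\eta)$. My first attempt would be a one-step correction $c_t=\alpha\ell_t$ together with an auxiliary lagged term (using $L_{t-1}+\ell_t/2$-type midpoints), checking the inequality through convexity of the log-partition function along the segment.

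\textbf{OCO upper bound.} I apply the same argument to continuous exponential weights (or a log-barrier/entropic potential) over the convex body $K$, with losses linearized via convexity. The final comparison uses the standard volume argument: compare the potential against a shrunken copy $(1-\epsilon)u+\epsilon K$ of volume $\epsilon^d\vol(K)$. This costs $\frac{d}{\eta}\log(1/\epsilon)+\epsilon T$, and choosing $\epsilon=d/T$ gives $O(d\log(1+T/d))$.

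\textbf{Lower bounds.} For experts, I use the halving adversary. Maintain a candidate set $S$ and run $\log_2 d$ phases, each of constant length. In each phase the adversary puts loss $1$ on whichever half of $S$ carries more of the learner's current weight, and that half is discarded. Both $x_t$ and $x_{t+1}$ are then charged $\Omega(1)$ per phase while the surviving expert pays $0$. The timing needs care, since $x_{t+1}$ may react to $\ell_t$; I would make each phase two rounds with the same loss, so reacting cannot help. This gives $\Omega(\log d)$ and needs $T\ge\log_2 d$.

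For OCO, I embed $d/2$ independent copies of a two-dimensional unit-disk instance and sum their regrets. On the disk, a multiscale construction plays linear losses whose directions are refined geometrically across $\log(1+T/d)$ scales. At scale $k$ the adversary localizes the comparator to an arc of width $2^{-k}$ and uses the $\Theta(T2^{-k})$-type budget of rounds to force $\Omega(1)$ regret per scale. Making these scales compose without the learner exploiting its post-loss move is the delicate part of the lower bound.
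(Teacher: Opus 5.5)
\paragraph{Main gap: the expert upper bound.} The decisive step is missing. You reduce everything to choosing $c_t$ so that $\langle\ell_t,p_t+p_{t+1}\rangle\le2(\Psi_t-\Psi_{t-1})$ holds exactly with constant $\eta$, and then leave exactly this as the ``main obstacle''.

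The ansatz by itself constrains nothing. Any played point and any potential value can be written as $\nabla\Phi(L_t+c_t)$ and $\Phi(L_t+c_t)$ after adding a multiple of $\mathbf 1$ to $c_t$.

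Your concrete first attempt also fails. For the pure shift $c_t=\alpha\ell_t$, the losses $\ell_{t-1}=\epsilon\mathbf 1,\ \ell_t=\mathbf 0$ force $0\le-2\alpha\epsilon$. The losses $\ell_{t-1}=\mathbf 0,\ \ell_t=\epsilon\mathbf 1$ force $2\epsilon\le2(1+\alpha)\epsilon$. So only $\alpha=0$ survives, which is plain Hedge with the third-order trapezoid error you identified yourself.

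The paper supplies three ingredients that your plan lacks:
\begin{itemize}
\item The alternating-sign state $y_t=(-1)^t\ell_t$. It writes the effective loss $z_t=\ell_{t-1}+\ell_t$ as the increment $s_t(y_t-y_{t-1})$ of a bounded variable.
\item The corrected potential $U(R,y)=F(R)-2\eta\Var_{q[\eta R]}(y)$, with $R_t=-\sum_{\tau\le t}z_\tau$. Along the path $\theta\mapsto(R_{t-1}-\theta z,\ y_{t-1}+\theta s_tz)$, its variance term contributes curvature of about $-4\eta\Var(z)$. This overrides the $+\eta\Var(z)$ from the log-partition term. The remaining terms are controlled by $|y|\le1$, which gives concavity at $\eta=1/10$.
\item The choice of $p_t$ as the left-endpoint derivative $\nabla_RU-s_t\nabla_yU$ at $(R_{t-1},y_{t-1})$. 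This is $\cF_{t-1}$-measurable, sums to one by shift invariance, and is nonnegative at $\eta=1/10$.
\end{itemize}
Concavity then gives $\langle p_t,z_t\rangle\le U(R_{t-1},y_{t-1})-U(R_t,y_t)$ exactly, and this telescopes to $\log d/\eta$.

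Your OCO upper bound inherits the gap, since it only says ``apply the same argument''. It also should not linearize the losses via gradients. There is no Lipschitz assumption, and $f_t$ is evaluated at both $x_t$ and $x_{t+1}$, so a gradient at one point does not bound the loss at the other. The paper instead runs continuous exponential weights directly on $h_t=f_{t-1}+f_t$ and uses Jensen's inequality (or sampling). Your volume argument on $(1-\epsilon)u+\epsilon K$ is the same as the paper's.

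\paragraph{Gap in the OCO lower bound.} This part is likewise only a sketch, its ``delicate part'' is left open, and the ingredients you indicate do not fit what the construction needs.

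\emph{Linear losses are too flat.} Let $u$ be a rotation of a boundary point $w$ with $1-\langle w,u\rangle=\delta$. A linear loss with values in $[-1,1]$ on the disk differs by at most $\|w-u\|_2=\sqrt{2\delta}$ between $w$ and $u$. It therefore cannot produce the steep but rare preference for $u$ that the paper's argument rests on. The paper mixes the affine $\mathsf H^{(1)}_w$ with the hinge $\mathsf H^{(2)}_{w,\delta}$ (value $1/2$ at $w$, $0$ at $u$) with probability $p=\delta/(1+\delta)$. This makes every point of the disk, including any hedge between $w$ and $u$, have expected loss at least $p/2$.

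\emph{Epoch lengths must grow.} The per-round expected gap at scale $\delta_j$ is only $O(\delta_j)$, so epoch $j$ lasts about $1/(2\delta_j)$ rounds. Finer scales thus need longer epochs, and a budget $\Theta(T2^{-k})$ that shrinks with $k$ does not by itself give $\Omega(1)$ per scale.

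\emph{How the scales compose.} With $\delta_{j+1}=2^{-18}\delta_j$, all later rotations sum to an angle small enough that the final reference point loses less than $1/80$ on each earlier epoch. The gap per epoch exceeds $1/40$, so these accumulate against a single comparator. Zero-loss separators decouple adjacent epochs, and the post-loss move is harmless simply because $f_t(x_{t+1})\ge0$.

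\emph{Missing pieces.} The $d/2$ disk copies must run on disjoint time segments, because adding them within a round violates $f_t\in[-1,1]$. You also need a separate $\Omega(T)$ instance for $T\lesssim d$, for example Rademacher coordinate losses on $[-1,1]^d$.

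\paragraph{Expert lower bound.} Your construction is fine, with two small corrections. One round per halving suffices: the surviving expert minimizes $\ell_t$, so $\langle p_{t+1},\ell_t\rangle\ge\ell_{t,i^\star}$ automatically, whereas two-round phases would need $T\ge2\log_2 d$. Also, $x_{t+1}$ is not charged $\Omega(1)$, since it can move onto the surviving half; only $x_t$ is.
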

More specifically, our contributions are as follows:

\begin{itemize}
\item In Section~\ref{sec:experts}, we study the $d$-expert problem and first establish an $\Omega(\log d)$ lower bound using a halving construction in which the adversary reveals the identity of the best expert one bit at a time. Although this logarithmic lower bound may initially appear weak, we show that it is in fact tight by introducing a corrected variant of Hedge with matching $\order(\log d)$ alternating regret. Importantly, this guarantee is independent of $T$, revealing a sharp distinction from classical regret and resolving the open problem posed by~\citet{cevher2023alternation,hait2025alternating}. By applying the alternating-regret-to-equilibrium reduction of~\citet{hait2025alternating}, our result further yields an $\order(\frac{\log d}{T})$ average-iterate convergence rate for alternating learning dynamics in two-player zero-sum normal-form games, matching the average-iterate convergence achieved by the optimistic multiplicative-weight update~\citep{rakhlin2013predictable,syrgkanis2015fast}.
Our algorithm is based on the standard exponential-weights algorithm but importantly modifies the played distribution using carefully designed linear and quadratic correction terms. These corrections cancel the unfavorable local norm that arises in the analysis and allow the resulting potential argument to telescope without accumulating a dependence on $T$.

\item In Section~\ref{sec:oco}, we extend our results to general OCO. We first prove an $\Omega\left(d\log \left(1+\frac{T}{d}\right)\right)$ lower bound using a multiscale geometric construction on the unit disk, organized over a sequence of epochs with geometrically decreasing spatial scales. In each epoch, the construction creates a constant expected loss gap while ensuring that a single final comparator remains nearly as competitive on all previous epochs. A direct-product argument then extends the two-dimensional lower bound to dimension $d$. To provide a matching upper bound, we extend our algorithm for the $d$-expert problem to a continuous action space. We show that the same corrected-potential argument yields a matching alternating-regret bound, resolving the open problem proposed in~\citet{hait2025alternating}.

\end{itemize}

\subsection{Related work.}

\paragraph{Online convex optimization.}
Prediction with expert advice and online convex optimization are classical models
for sequential decision making. Weighted Majority, Hedge, and exponential weights
are foundational algorithms for the expert problem
\citep{littlestone1994weighted,freund1997decision,cesa2006prediction}, while
OCO extends the action space from a finite set of experts to a convex domain
\citep{zinkevich2003online,hazan2016introduction}. The classical objective in
this literature is standard regret, and a broad line of work develops sharper
guarantees by exploiting additional structure such as curvature or predictability
of the loss sequence
\citep{hazan2010variation,rakhlin2013predictable}. Alternating regret studies a
different performance measure: the loss \(f_t\) is evaluated not only at the
action \(x_t\) chosen before \(f_t\) is revealed, but also at the next action
\(x_{t+1}\), which may depend on \(f_t\). Consequently, alternating regret can
be substantially smaller than standard regret even when the loss sequence is
fully adversarial.

\paragraph{Alternating regret and alternating learning dynamics.}
Alternating updates have long been used in game solving, most notably in CFR+
and related methods for extensive-form games
\citep{tammelin2014solving,burch2019revisiting}. Their theoretical advantages
have also been studied in normal-form and convex-concave zero-sum games
\citep{bailey2020finite,wibisono2022alternating,katona2026symplectic}.
\citet{wibisono2022alternating} connected the acceleration produced by
alternation to a regret quantity that was subsequently formalized as
alternating regret by \citet{cevher2023alternation}. The latter work studied
alternating regret against an arbitrary adversarial sequence in online linear
optimization and established \(o(\sqrt{T})\) guarantees over several important
domains. \citet{hait2025alternating} extended this adversarial perspective to
general OCO with bounded convex losses and showed how alternating-regret
guarantees translate directly into convergence guarantees for alternating
learning dynamics in two-player zero-sum and general-sum convex games. In
contrast to analyses that rely on the particular coupled dynamics of a game,
these results isolate a worst-case online-learning property of each player.
Our work completes this line by identifying the minimax alternating-regret
rates for both the expert problem and general bounded-loss OCO.

\paragraph{Fast convergence in games.}
The connection between regret minimization and equilibrium computation is
classical: no-regret learning yields convergence of time-averaged play to
equilibrium, and the usual \(\order(\sqrt{T})\) regret guarantee leads to the
canonical \(\order(1/\sqrt{T})\) convergence rate
\citep{freund1999adaptive}. A large literature obtains faster rates by
exploiting the endogenous structure of game-generated loss sequences, with
optimistic and predictive methods providing a particularly successful route
\citep{rakhlin2013predictable,syrgkanis2015fast,chen2020hedging,
daskalakis2021nearoptimal,farina2022nearoptimal}. For two-player zero-sum
games, such methods can achieve total regret independent of $T$ and hence an
$\order(1/T)$ average-iterate convergence rate. Alternation provides a distinct
route to acceleration: under alternating play, equilibrium error is controlled
by the players' average alternating regret
\citep{wibisono2022alternating,hait2025alternating}. Recent work further
demonstrates that optimism and alternation are genuinely different mechanisms:
\citet{lazarsfeld2025optimism} show that, without regularization, optimistic
Fictitious Play attains constant regret in \(2\times2\) zero-sum games with a
unique interior Nash equilibrium, whereas Alternating Fictitious Play can
suffer \(\Omega(\sqrt{T})\) regret even on Matching Pennies. Our
\(\order(\log d)\) alternating-regret guarantee for the expert problem therefore
yields an \(\order(\frac{\log d}{T})\) average-iterate convergence rate in finite
two-player zero-sum games, matching the one achieved by the optimistic dynamics under simultaneous play.

\section{Preliminaries}\label{sec:prelim}

\paragraph{Notation.}

For an integer \(m\ge1\), write \([m]=\{1,\ldots,m\}\), let
\(\Delta_m=\{p\in\R^m_+:\sum_{i=1}^m p_i=1\}\) be the probability simplex,
and let \(e_i\) denote its \(i\)-th standard basis vector. With a slight abuse of notation, given a
\(d\)-dimensional compact convex set \(\Omega\subseteq\R^d\), we also define
\(\Delta_{\Omega}\triangleq\{p:\Omega\to\R_+:
\int_{\Omega}p(x)\,dx=1\}\) to be the set of probability densities over
\(\Omega\), and write \(\vol(\Omega)\) for its Lebesgue volume. We use
\(\mathbf{0}\) for the all-zero vector in an appropriate dimension (or the
zero function, when the context is clear). Inner products are denoted by
\(\ip{x}{y}\), and \(\|\cdot\|_2\) denotes the Euclidean norm. We write \(\partial A\) for the boundary of a set \(A\).

For a distribution \(q\in\Delta_m\), a subset \(S\subseteq[m]\), and a
vector \(v\in\R^m\), we use
\[
  q(S)\triangleq\sum_{i\in S}q_i,
  \qquad
  \E_q[v]=\E_{i\sim q}[v_i]\triangleq\sum_{i=1}^m q_i v_i,
  \qquad
  \Var_q(v)=\Var_{i\sim q}(v_i)
  \triangleq\E_{i\sim q}\!\left[
    \bigl(v_i-\E_q[v]\bigr)^2
  \right].
\]
The same notation is used for expectations and variances under a probability
density \(q\in\Delta_\Omega\), with sums replaced by integrals. For a score
vector \(R\in\R^m\), let \(q[R]\in\Delta_m\) denote the exponential-weights
distribution with \(q_i[R]\propto\exp(R_i)\). For a score function
\(R:\Omega\to\R\), we use the same notation \(q[R]\in\Delta_\Omega\) for the
density \(q[R](x)\propto\exp(R(x))\). For a differentiable convex function \(F:\R^m\to\R\), its Bregman
divergence is
\(
  D_F(a,b)
  \triangleq F(a)-F(b)-\ip{\nabla F(b)}{a-b}.
\)
We use
\(\widetilde{\order}(\cdot)\) to suppress logarithmic factors.

\paragraph{Problem setting.}

Let \(\cX\) be a \(d\)-dimensional compact convex set. The interaction between the learner and the environment lasts for $T$ rounds. At each round \(t\in[T]\), the
learner chooses \(x_t\in\cX\), while simultaneously the adversary picks a convex function \(f_t:\cX\to[-1,1]\) and reveals it to the learner. Let $(\cF_t)_{t=0}^T$ denote the learner's filtration, where $\cF_0$ is the
$\sigma$-algebra generated by the initial information and the learner's
internal random seed, and, for each $t\in[T]$, $\cF_t \triangleq \sigma\bigl(\cF_0,f_1,\ldots,f_t\bigr)$
is the $\sigma$-algebra generated by $\cF_0$ and the revealed loss functions
$f_1,\ldots,f_t$. Thus the action
\(x_t\) must be
\(\cF_{t-1}\)-measurable and the adversary must commit to \(f_t\) independently of the learner's fresh round-$t$ sampling randomness. After observing $f_T$, the learner additionally chooses $x_{T+1}$ based on $\cF_T$.

In this paper, we consider alternating regret as the performance measure. Specifically, the alternating regret against a comparator
\(u\in\cX\) is defined as
$\AltReg_T(u)
  \triangleq
  \sum_{t=1}^T
  \bigl(f_t(x_t)+f_t(x_{t+1})-2f_t(u)\bigr)$, and we define $\AltReg_T\triangleq \max_{u\in\cX}\AltReg_T(u)$.
We adopt the endpoint convention \(f_0(x)=f_{T+1}(x)=0\) for all $x\in\cX$ and define the effective
loss \(
  h_t=f_{t-1}+f_t\) for $t=1,\ldots,T+1$.
Each \(f_t\) appears once in \(h_t(x_t)\), once in
\(h_{t+1}(x_{t+1})\), and twice at the comparator. Consequently, we can equivalently write $\AltReg_T(u)$ as follows:
\begin{equation}\label{eq:endpoint-identity}
  \AltReg_T(u)
  =\sum_{t=1}^{T+1}\bigl(h_t(x_t)-h_t(u)\bigr).
\end{equation}
We only assume that the convex losses satisfy $f_t\in[-1,1]$ and do not assume Lipschitzness, smoothness, or
strong convexity.

\paragraph{Expert Problem.} In this work, we also consider the expert problem, which is the linear specialization of the OCO protocol. In this case, \(\cX=\Delta_d\), and the loss function \(f_t\) is induced by a
loss vector \(\ell_t\in[-1,1]^d\) defined as \(
  f_t(p)=\ip{p}{\ell_t}.
\)
Following the same convention, we set \(\ell_0=\ell_{T+1}=\mathbf{0}\) and define the effective loss vector
\(
  z_t=\ell_{t-1}+\ell_t\) for all $t=1,\ldots,T+1$.
Thus \(h_t(p)=\ip{p}{z_t}\), and the expert alternating regret with respect to comparator $u\in\Delta_d$ is defined as
\begin{equation}\label{eq:expert-regret}
  \AltRegExp_T(u)\triangleq\sum_{t=1}^{T+1}\ip{p_t-u}{z_t},
\end{equation}
and we also define $\AltRegExp_T \triangleq \max_{u\in\Delta_d}\AltRegExp_T(u)$.
\section{Alternating Regret in the Expert Problem}
\label{sec:experts}

We begin with the $d$-expert problem and derive tight lower and upper bounds.
\subsection{Lower bound}
\label{sec:expert-lower}

In this section, we show that the alternating-regret lower bound for the
$d$-expert problem is $\Omega(\log d)$ even when the adversary is oblivious,
as formally stated in the following theorem.

\begin{theorem}\label{thm:expert-lower}
	For every \(T\ge\log_2 d\), \(d\ge2\), and every possibly randomized expert
	algorithm, there is an oblivious sequence
	\(\ell_1,\ldots,\ell_T\in[-1,1]^d\) for which
	$$
	\mathbb{E}\bigl[\AltRegExp_T\bigr]
	\ge
	\lfloor\log_2 d\rfloor.
	$$
	Here, the expectation is taken with respect to the internal randomization of
	the learner.
\end{theorem}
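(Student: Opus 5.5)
The plan is to use Yao's principle. I will build a random oblivious loss sequence, show that every (possibly randomized) learner has expected alternating regret at least $k\triangleq\lfloor\log_2 d\rfloor$ under it, and then fix a realization. Because the adversary's randomness is drawn independently of the learner's internal seed, the joint expectation being at least $k$ implies that some deterministic sequence $\ell_1,\ldots,\ell_T$ achieves $\E[\AltRegExp_T]\ge k$, with the expectation over the learner alone. Since $\AltRegExp_T\ge\AltRegExp_T(u)$ for any $u$, it suffices to exhibit a comparator $u^\star$, depending on the random sequence, with $\E[\AltRegExp_T(u^\star)]\ge k$.

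\textbf{Construction (halving).} Let $S_0$ be a fixed set of $2^k$ experts; every expert outside $S_0$ gets loss $+1$ in every round $t\le k$. For $j=1,\ldots,k$ I proceed as follows. Given the current candidate set $S_{j-1}$ of size $2^{k-j+1}$, split it uniformly at random into two halves $A_j,B_j$, using fresh coins independent of everything before. Set $\ell_{j,i}=-1$ for $i\in A_j$ and $\ell_{j,i}=+1$ for all other $i$, including the eliminated experts. Then set $S_j=A_j$. After $k\le T$ rounds a single expert $i^\star$ survives. For $t>k$ set $\ell_t=\mathbf 0$. Take $u^\star=e_{i^\star}$; it has loss $-1$ in each of the rounds $1,\ldots,k$. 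The sequence is oblivious because it does not depend on the learner's actions.

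\textbf{Per-round accounting.} I will use the original form $\AltRegExp_T(u)=\sum_t\bigl(\ip{p_t}{\ell_t}+\ip{p_{t+1}}{\ell_t}-2\ip{u}{\ell_t}\bigr)$. Rounds $t>k$ contribute $0$. Fix a round $j\le k$.
\begin{itemize}
\item The comparator term equals $-2\cdot(-1)=+2$.
\item The look-ahead term $\ip{p_{j+1}}{\ell_j}$ may depend on $\ell_j$, but since $\ell_j\in[-1,1]^d$ it is at least $-1$.
\item The key term is $\ip{p_j}{\ell_j}$. The distribution $p_j$ is measurable with respect to the learner's seed and $\ell_1,\ldots,\ell_{j-1}$, so it is independent of the fresh split $(A_j,B_j)$. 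Conditioned on $p_j$, each $i\in S_{j-1}$ lies in $A_j$ with probability exactly $1/2$, so $\E[\ell_{j,i}\mid p_j]=0$ for such $i$. Every $i\notin S_{j-1}$ has $\ell_{j,i}=+1$. Hence $\E[\ip{p_j}{\ell_j}]=\E[p_j([d]\setminus S_{j-1})]\ge0$.
\end{itemize}
Each round $j\le k$ therefore contributes at least $0-1+2=1$ in expectation. Summing over the $k$ rounds gives $\E[\AltRegExp_T(u^\star)]\ge k$, and averaging then fixes a good deterministic sequence.

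The main point to check carefully is the independence step: the adversary's split in round $j$ must be independent of the learner's round-$j$ action. This holds because the coins are fresh and the sequence ignores the learner's actions, which is exactly the protocol's requirement that $f_t$ not depend on the round-$t$ sampling. The rest is bookkeeping. The endpoint conventions need no special care, since $\ell_{T+1}=\mathbf 0$ and the term $\ip{p_{k+1}}{\ell_k}$ is covered by the $-1$ bound.
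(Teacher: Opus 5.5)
Your proof is correct and is essentially the paper's argument: the same random-halving oblivious construction (the paper draws a single random permutation of $S_0$ up front instead of fresh coins each round, which is equivalent), the same per-round bound $\E[\ip{p_j}{\ell_j}\mid\mathcal F_{j-1}]=p_j([d]\setminus S_{j-1})\ge 0$ combined with $\ip{p_{j+1}}{\ell_j}\ge-1$ and the comparator's $+2$, and the same averaging step to fix a deterministic sequence. One small thing to tighten is the conditioning: condition on $\mathcal F_{j-1}$, which fixes both $p_j$ and $S_{j-1}$, rather than on $p_j$ alone. The coins used for the split are fresh, but $p_j$ and the set $A_j$ are not unconditionally independent, since both depend on $S_{j-1}$.
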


Although this lower bound may seem weak because it does not depend on $T$, we
will show in Section~\ref{sec:expert-upper} that this rate is in fact tight.
The hard loss sequence construction has a simple interpretation. Before the
game begins, the adversary chooses a nested sequence of sets, where each set
is a uniformly random half of the preceding one. Each active round reveals
one more bit about the final comparator. Conditional on the previously
revealed losses, the learner cannot determine which half of the current set
will be retained. Consequently, the learner places, in expectation, only
half of its current probability mass on the retained half. After
\(m=\lfloor\log_2 d\rfloor\) such cuts, only one expert remains.

\begin{proof}
	Let \(m=\lfloor\log_2 d\rfloor\), and choose an initial set
	\(S_0\subseteq[d]\) of \(2^m\) experts. To prove the existence of the desired
	oblivious sequence, first consider the following randomized oblivious
	construction. Before the game begins, sample a uniformly random permutation
	\(\pi\) of the elements of \(S_0\), independently of the learner's internal
	randomization, and define
	$$
	S_t
	=
	\bigl\{\pi(1),\ldots,\pi(2^{m-t})\bigr\},
	\qquad t\in[m].
	$$
	Thus, conditional on \(S_{t-1}\), the set \(S_t\) is a uniformly random
	subset of \(S_{t-1}\) of cardinality \(\frac12|S_{t-1}|\). On every active
	round \(t\le m\), define \(\ell_t\in[-1,1]^d\) by
	\begin{equation}\label{eq:halving-loss}
		\ell_{t,i}
		=
		\begin{cases}
			-1,&i\in S_t,\\
			+1,&i\notin S_t.
		\end{cases}
	\end{equation}
	All later losses are zero, meaning that
	\(\ell_t=\mathbf{0}\) for \(t\in\{m+1,\ldots,T\}\). Since the permutation
	\(\pi\), and hence the entire loss sequence, is sampled before the
	interaction begins, the resulting adversary is oblivious.
	
	We next lower bound the learner's expected loss on each active round. Recall
	from Section~\ref{sec:prelim} that \(p_t\) is
	\(\mathcal F_{t-1}\)-measurable. Moreover, the previously revealed losses
	\(\ell_1,\ldots,\ell_{t-1}\) determine
	\(S_1,\ldots,S_{t-1}\), and in particular \(S_{t-1}\). Since \(\pi\) is
	uniform and independent of the learner's internal randomization, conditional
	on \(\mathcal F_{t-1}\), the set \(S_t\) remains a uniformly random half of
	\(S_{t-1}\). Therefore,
	$
	\mathbb{E}\bigl[p_t(S_t)\mid\mathcal F_{t-1}\bigr]
	=
	\frac12 p_t(S_{t-1}).
	$
	
	Let \(i^\star=\pi(1)\) denote the unique element of \(S_m\). Since
	\(i^\star\in S_t\) for every \(t\in[m]\), we have
	\(\ell_{t,i^\star}=-1\) on every active round. Hence
	$
	\ip{p_t}{\ell_t}-\ell_{t,i^\star}
	=
	2\bigl(1-p_t(S_t)\bigr).
	$ Taking the conditional expectation and using \(p_t(S_{t-1})\le1\), we obtain that
	\begin{align*}
		\mathbb{E}\left[
		\ip{p_t}{\ell_t}-\ell_{t,i^\star}
		\,\middle|\,\mathcal F_{t-1}
		\right]=
		2\left(
		1-\frac12p_t(S_{t-1})
		\right) =
		2-p_t(S_{t-1})
		\ge 1.
	\end{align*}
	The next distribution \(p_{t+1}\) is chosen after \(\ell_t\) is revealed. Since \(\ell_{t,i^\star}=-1\) is a minimum coordinate of \(\ell_t\), we know that
	$$
	\ip{p_{t+1}}{\ell_t}-\ell_{t,i^\star}
	=
	2\bigl(1-p_{t+1}(S_t)\bigr)
	\ge0.
	$$
	Therefore, for every active round \(t\in[m]\), taking the expectation over both the randomness of the algorithm and the adversary gives
	$$
	\mathbb{E}_{\pi,\mathrm{alg}}
	\left[
	\ip{p_t}{\ell_t}
	+
	\ip{p_{t+1}}{\ell_t}
	-
	2\ell_{t,i^\star}
	\right]
	\ge 1.
	$$
	Thus, each active round contributes at least one unit of alternating regret
	in expectation. Summing over the \(m\) active rounds gives
	$
	\mathbb{E}_{\pi,\mathrm{alg}}
	\bigl[\AltRegExp_T\bigr]
	\ge m.
	$
	Since this inequality holds after averaging over the finitely many
	permutations of \(S_0\), there exists a fixed permutation \(\bar{\pi}\) such
	that the corresponding deterministic loss sequence satisfies
	$
	\mathbb{E}_{\mathrm{alg}}
	\bigl[\AltRegExp_T\bigr]
	\ge m.
	$
	This loss sequence is fixed before the interaction begins and is therefore
	oblivious, which proves the claim.
\end{proof}

\subsection{Upper bound}
\label{sec:expert-upper}

We now turn to the upper bound for the expert problem. Somewhat surprisingly,
the $\Omega(\log d)$ lower bound from Section~\ref{sec:expert-lower} is
achievable without any dependence on the horizon, revealing a sharp
separation from classical regret. Our algorithm is based on exponential
weights, but modifies the distribution played on each round by linear and
quadratic correction terms. We first give a walkthrough of the algorithm and
then derive these corrections from the one-step condition needed for
logarithmic alternating regret.

\paragraph{Algorithm walkthrough.}
The algorithm maintains a score vector $R_t$ and the corresponding
exponential-weights distribution $q_t$. At the beginning of round $t$, the
learner knows $q_{t-1}$ and the previously revealed loss $\ell_{t-1}$. To
prepare the potential correction used below, define
$s_t=(-1)^t$ and introduce the auxiliary signed-loss variable
$y_t=s_t\ell_t$. This choice is tied to the effective loss
$z_t=\ell_{t-1}+\ell_t$: since $s_{t-1}=-s_t$, it satisfies
$y_t-y_{t-1}=s_tz_t$. Thus, up to a known sign, the effective loss is the
variation of a bounded auxiliary variable. The potential analysis below
shows that a quadratic correction in this variable supplies the curvature
needed to remove the usual Hedge stability term. Moreover, $y_{t-1}$ is
known at the beginning of round $t$ and can therefore be used to construct
$p_t$. We center this known signed loss under $q_{t-1}$ by defining
$Z_{t-1,i}=y_{t-1,i}-\E_{j\sim q_{t-1}}[y_{t-1,j}]$, and let
$H_{t-1}=\E_{i\sim q_{t-1}}[Z_{t-1,i}^2]$ denote its variance. The learner
plays
\begin{align}
\label{eq:played-distribution}
p_{t,i}
=
q_{t-1,i}\cdot
\left(
1+4\eta s_t Z_{t-1,i}
-2\eta^2\left(Z_{t-1,i}^2-H_{t-1}\right)
\right).
\end{align}
After observing $\ell_t$, the algorithm sets $y_t=s_t\ell_t$, updates
$R_t=R_{t-1}-(\ell_{t-1}+\ell_t)$, and lets $q_t$ be the
exponential-weights distribution induced by $R_t$. The terminal distribution
$p_{T+1}$ is computed in the same way from the known state $(q_T,y_T)$ after
$\ell_T$ is revealed.

A direct calculation shows that $p_t$ is a valid distribution when
$\eta=1/10$; the formal statement and proof are given in
Proposition~\ref{prop:expert-feasibility} in
Appendix~\ref{app:expert-potential}. At first sight, however, the two
correction terms in \eqref{eq:played-distribution} may appear specialized.
The remainder of this section explains how both the variance correction and
the played distribution arise from the regret analysis.

\begin{algorithm}[H]
\caption{Corrected AltHedge}
\label{alg:althedge}
\DontPrintSemicolon
\KwIn{Number of experts $d$ and learning rate $\eta>0$}
Set $\ell_0=y_0=R_0=\mathbf{0}\in\R^d$, and let $q_0$ be uniform on $[d]$.\;
\For{$t=1,\ldots,T$}{
  Set $s_t=(-1)^t$ and compute
  $$
    \mu_{t-1}=\E_{i\sim q_{t-1}}[y_{t-1,i}],
    \qquad
    Z_{t-1,i}=y_{t-1,i}-\mu_{t-1},
    \qquad
    H_{t-1}=\E_{i\sim q_{t-1}}[Z_{t-1,i}^2].
  $$
  
  Compute $p_t\in\Delta_d$ according to
  \eqref{eq:played-distribution}.\;
  
  Play $p_t$ and observe $\ell_t\in[-1,1]^d$.\;
  
  Set
  $$
    y_t=s_t\ell_t,
    \qquad
    R_t=R_{t-1}-(\ell_t+\ell_{t-1}),
    \qquad
    q_{t}=q[\eta R_{t}].
  $$
}
\end{algorithm}

For the analysis, we use the endpoint convention
$\ell_{T+1}=y_{T+1}=\mathbf{0}$ and set
$R_{T+1}=R_T-\ell_T$. Recall that
$z_t=\ell_{t-1}+\ell_t$, so that
$R_t=-\sum_{\tau=1}^t z_\tau$. We also set $s_{T+1}=(-1)^{T+1}$ and compute
$p_{T+1}$ from $(q_T,y_T)$ according to \eqref{eq:played-distribution}.

\paragraph{The ideal post-update iterate.}
To understand the design of $p_t$, we first consider the distribution that
would be used if the current effective loss $z_t$ were already known. Define
the log-partition function and its gradient by
\begin{equation}
\label{eq:log-partition}
F(R)
\triangleq
\frac{1}{\eta}
\log\left(\frac{1}{d}\sum_{i=1}^d e^{\eta R_i}\right),
\qquad
q[\eta R]\triangleq\nabla F(R).
\end{equation}
By construction, $q_t=q[\eta R_t]$. Since $R_t=R_{t-1}-z_t$, convexity of $F$
gives $\langle q_t,z_t\rangle\le F(R_{t-1})-F(R_t)$. Therefore, we have for every expert $i\in[d]$, 
\begin{equation}
\label{eq:ideal-qt-bound}
\sum_{t=1}^{T+1}\langle q_t-e_i,z_t\rangle
\le
-F(R_{T+1})+R_{T+1,i}
\le
\frac{\log d}{\eta},
\end{equation}
where the last inequality follows from
$F(R)\ge\max_iR_i-(\log d)/\eta$. Thus, if the learner could play the
post-update distribution $q_t$, the desired logarithmic bound would follow
immediately.

However, the caveat is that $q_t$ is not $\cF_{t-1}$ measurable and depends on
$z_t=\ell_{t-1}+\ell_t$, and hence on the unrevealed loss $\ell_t$. At the beginning of round $t$, the
latest available exponential-weights iterate is $q_{t-1}$. If the learner
plays this iterate, then
$\langle q_{t-1},z_t\rangle
=
F(R_{t-1})-F(R_t)+D_F(R_t,R_{t-1})$, and consequently
\begin{equation}
\label{eq:preupdate-hedge-bound}
\sum_{t=1}^{T+1}\langle q_{t-1}-e_i,z_t\rangle
\le
\frac{\log d}{\eta}
+
\sum_{t=1}^{T+1}D_F(R_t,R_{t-1}).
\end{equation}
The additional Bregman-divergence sum is the standard Hedge stability cost.
In general, this term is of order
$\eta\sum_t\Var_{q_{t-1}}(z_t)$ and may contribute order $\eta T$.
Therefore, the natural $\cF_{t-1}$-measurable choice $q_{t-1}$ recovers only
the classical $O(\frac{\log d}{\eta}+\eta T)$ guarantee.

Unlike in the standard expert problem, however, the learner is not completely ignorant of the effective loss $z_t$ when selecting $p_t$: its component $\ell_{t-1}$ is already $\cF_{t-1}$-measurable. We will then see how this property can be utilized to design a specific \emph{potential function} so that the $\cF_{t-1}$-measurable distribution $p_t$ enjoys the same telescoping bound as the ideal post-update iterate.

\paragraph{A corrected-potential criterion.}
The preceding derivation suggests the following sufficient condition. It
is enough to find a correction sequence $C_t$ satisfying
$C_0=C_{T+1}=0$ and
\begin{align}
\langle p_t,z_t\rangle
\le
F(R_{t-1})-F(R_t)+C_t-C_{t-1}
\label{eq:sufficient-credit-condition}
\end{align}
on every round. Indeed, summing
\eqref{eq:sufficient-credit-condition} makes both the log-partition terms
and the corrections telescope. Using $F(R_0)=0$, we obtain
$\sum_{t=1}^{T+1}\langle p_t,z_t\rangle\le-F(R_{T+1})$.
Combining this with
$F(R_{T+1})\ge\max_iR_{T+1,i}-(\log d)/\eta$
gives the desired regret bound. Thus,
\eqref{eq:sufficient-credit-condition} is the one-step inequality that we
aim to establish.

More precisely, for each round $t$, we aim to construct a concave
function $G:[0,1]\to\mathbb{R}$ satisfying
$G(0)=F(R_{t-1})-C_{t-1}$,
$G(1)=F(R_t)-C_t$, and
$G'(0)=-\langle p_t,z_t\rangle$, where $p_t$ depends only on information
available before $\ell_t$ is revealed. Concavity then gives
$G(1)-G(0)\le G'(0)=-\langle p_t,z_t\rangle$, which rearranges to
\eqref{eq:sufficient-credit-condition}. The use of the left-endpoint
derivative is important because its coefficients can depend only on the
pre-update state, thus $\cF_{t-1}$ measurable, whereas the right endpoint depends on the unrevealed
loss $\ell_t$.

To construct such a function, we consider the following family of
corrected potentials, which will determine both the correction and the
played distribution. For $\alpha>0$, define
\begin{align}
U_\alpha(R,y)
\triangleq
F(R)-\alpha\eta\Var_{q[\eta R]}(y),
\label{eq:parametric-corrected-potential}
\end{align}
where $R\in\R^d$ represents a score vector and $y\in\R^d$ represents a
signed single-round loss state.

\paragraph{Intuition for the potential function.}
We next explain why the potential in
\eqref{eq:parametric-corrected-potential} takes this particular form.
Recall that the log-partition term $F(R)$ has positive curvature, which
gives rise to the positive Bregman-divergence term in
\eqref{eq:preupdate-hedge-bound}. Concretely, for an arbitrary
effective-loss direction $z$, define
\begin{align}
R(\theta)
\triangleq
R_{t-1}-\theta z,
\qquad
\theta\in[0,1].
\label{eq:alternating-path}
\end{align}
Along this score path, a direct calculation shows that 
\begin{align*}
\frac{\mathrm{d}^2}{\mathrm{d}\theta^2}F(R(\theta))
=
\eta\Var_{q[\eta R(\theta)]}(z)
\ge0.
\end{align*}
The reason for introducing the negative variance term in
\eqref{eq:parametric-corrected-potential} is then to offset this positive
curvature.

To see why this correction is natural, define
$y(\theta)\triangleq y_{t-1}+\theta s_tz$. For the realized direction
$z=z_t$, we have $y(0)=y_{t-1}=s_{t-1}\ell_{t-1}$ and
$y(1)=y_t=s_t\ell_t$. Thus, the left endpoint is already known when
$p_t$ is chosen, while the right endpoint depends on the unrevealed loss
$\ell_t$. Suppose that $q[\eta R(\theta)]$ were held fixed along the path. Then we have that
\begin{align*}
\frac{\mathrm{d}^2}{\mathrm{d}\theta^2}
\Var_q(y(\theta))
=
2\Var_q(z).
\end{align*}
Hence the correction $-\alpha\eta\Var_q(y(\theta))$ contributes a negative curvature
$-2\alpha\eta\Var_q(z)$. Defining
$G_\alpha(\theta)\triangleq U_\alpha(R(\theta),y(\theta))$, then this fixed-$q$
calculation would therefore give curvature
$(1-2\alpha)\eta\Var_q(z)$. Thus, any $\alpha>1/2$ would reverse the
positive curvature of $F$.

Moreover, the derivative of the corrected potential at the known left endpoint also
naturally induces a candidate distribution. By the chain rule, we have that the derivative of $G_\alpha$ at $\theta=0$ is equal to
\begin{align*}
G_\alpha'(0)
=
-\left\langle
\nabla_RU_\alpha(R_{t-1},y_{t-1})
-s_t\nabla_yU_\alpha(R_{t-1},y_{t-1}),
z
\right\rangle.
\end{align*}
Accordingly, define the coefficient vector
\begin{align*}
\widetilde p_t^{(\alpha)}
\triangleq
\nabla_RU_\alpha(R_{t-1},y_{t-1})
-s_t\nabla_yU_\alpha(R_{t-1},y_{t-1}).
\end{align*}
Since this vector depends only on $(R_{t-1},y_{t-1})$, it is
$\cF_{t-1}$-measurable. It also automatically has unit mass. Indeed, for
every $c\in\R$, the potential satisfies
$U_\alpha(R+c\mathbf{1},y)=U_\alpha(R,y)+c$ and
$U_\alpha(R,y+c\mathbf{1})=U_\alpha(R,y)$.
Differentiating these identities with respect to $c$ gives
$\langle\nabla_RU_\alpha(R,y),\mathbf{1}\rangle=1$ and
$\langle\nabla_yU_\alpha(R,y),\mathbf{1}\rangle=0$, and hence
$\sum_i\widetilde p_{t,i}^{(\alpha)}=1$. These are the two desired properties for our choice of $p_t$ and it remains only to choose $\alpha$ and $\eta$ so that $\widetilde p_t^{(\alpha)}$ is coordinatewise nonnegative.

\paragraph{Concrete choices of $\alpha$ and $\eta$.}
Because $q[\eta R(\theta)]$ also varies along the path, the previous fixed-$q[\eta R(\theta)]$ curvature calculation above is only heuristic. However, with the help of the boundedness of the signed-loss interpolation $y(\theta)$ and a more involved calculation, we can indeed show in Proposition~\ref{prop:parametric-corrected-potential} in Appendix~\ref{app:expert-potential} that the complete second-order derivative of $U_\alpha(R(\theta),y(\theta))$ is upper bounded as follows:
\begin{align}
\frac{\mathrm{d}^2}{\mathrm{d}\theta^2}
U_\alpha(R(\theta),y(\theta))
\le
\left[
(1-2\alpha)\eta
+8\alpha\eta^2
+3\alpha\eta^3
\right]
\Var_{q[\eta R(\theta)]}(z).
\label{eq:parametric-curvature-bound}
\end{align}
Therefore, choosing $\alpha=2$ and $\eta=1/10$ makes the coefficient on the
right-hand side strictly negative. We then fix these
two values and write $U\equiv U_2$. For notational convenience, however,
we retain the symbol $\eta$ in the formulas below rather than substituting
$1/10$, so that the dependence of the algorithm and the potential on the
learning rate remains explicit. Thus, we define
\begin{align}
U(R,y)
\triangleq
F(R)-2\eta\Var_{q[\eta R]}(y).
\label{eq:expert-potential}
\end{align}
Set $C_t\triangleq2\eta\Var_{q_t}(y_t)$. Then we have that
$U(R_t,y_t)=F(R_t)-C_t$. We also have $C_0=C_{T+1}=0$ as $y_0=y_{T+1}=\mathbf{0}$.
Moreover, \eqref{eq:parametric-curvature-bound} shows that
$G_2(\theta)$ is concave on $[0,1]$. 

Now we compute $G_2'(0)$ and verify that this indeed forms a valid distribution. Based on this choice of potential, direct differentiation gives that
\begin{align*}
G_2'(0)
=
-\sum_{i=1}^d
q_{t-1,i}
\left[
1
+4\eta s_tZ_{t-1,i}
-2\eta^2
\left(
Z_{t-1,i}^2-H_{t-1}
\right)
\right]z_i.
\end{align*}
The coefficient vector in $-G_2'(0)$ is exactly the distribution $p_t$
in \eqref{eq:played-distribution}. As shown above, its coordinates sum to
one, while Proposition~\ref{prop:expert-feasibility} establishes their
nonnegativity at $\eta=1/10$. Hence $p_t\in\Delta_d$.

The following lemma summarizes the two properties needed in the
corrected-potential argument.

\begin{lemma}
\label{lem:correction-potential-concavity}
Fix a round $t$, a score vector $R_{t-1}\in\mathbb{R}^d$, and a
signed-loss vector $y_{t-1}\in[-1,1]^d$. For any
$z\in\mathbb{R}^d$ satisfying
$y_{t-1}+s_tz\in[-1,1]^d$, let $R(\theta)$ be as in
\eqref{eq:alternating-path} and let $y(\theta)$ be the signed-loss
interpolation introduced above. If $\eta=1/10$, then:
\begin{enumerate}
\item $G_2$ is concave on $[0,1]$;
\item $G_2'(0)=-\langle p_t,z\rangle$, where $p_t$ is the
$\cF_{t-1}$-measurable distribution in
\eqref{eq:played-distribution}.
\end{enumerate}
\end{lemma}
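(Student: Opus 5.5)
Both claims reduce to statements about the scalar function $G_2(\theta)=U(R(\theta),y(\theta))=F(R(\theta))-2\eta\Var_{q[\eta R(\theta)]}(y(\theta))$, where $R(\theta)=R_{t-1}-\theta z$ and $y(\theta)=y_{t-1}+\theta s_t z$. For part~1, we can invoke Proposition~\ref{prop:parametric-corrected-potential} with $\alpha=2$. It applies because $y(\theta)$ is a convex combination of $y_{t-1}\in[-1,1]^d$ and $y_{t-1}+s_tz\in[-1,1]^d$, so $y(\theta)\in[-1,1]^d$ for every $\theta\in[0,1]$; this boundedness is the only hypothesis that proposition needs. The bound \eqref{eq:parametric-curvature-bound} then gives
\[
\frac{\mathrm d^2}{\mathrm d\theta^2}G_2(\theta)\le\bigl[-3\eta+16\eta^2+6\eta^3\bigr]\Var_{q[\eta R(\theta)]}(z).
\]
At $\eta=1/10$ the bracket equals $-0.3+0.16+0.006=-0.134<0$. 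Since the variance is nonnegative, $G_2''\le0$ on $[0,1]$. $G_2$ is smooth (a composition of smooth maps), so it is concave.

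For part~2, we compute $G_2'(0)$ by the chain rule as in the text:
\[
G_2'(0)=-\bigl\langle\nabla_RU-s_t\nabla_yU,\;z\bigr\rangle,
\]
evaluated at $(R_{t-1},y_{t-1})$. The key steps are as follows.
\begin{enumerate}
\item $\nabla_RF(R)=q[\eta R]$, which gives $q_{t-1}$.
\item Write $\Var_q(y)=\sum_i q_iy_i^2-(\sum_iq_iy_i)^2$. Its $y$-gradient is $2q_i(y_i-\mu)=2q_iZ_i$, so $-s_t\nabla_y U$ contributes $4\eta s_tq_{t-1,i}Z_{t-1,i}$.
\item For fixed $y$, the $q$-gradient of $\Var_q(y)$ is $y_i^2-2\mu y_i$. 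Composing with $\partial q_j/\partial R_i=\eta q_j(\delta_{ij}-q_i)$ gives
\[
\partial_{R_i}\Var_{q[\eta R]}(y)=\eta q_i\bigl(y_i^2-2\mu y_i-\E_q[y^2]+2\mu^2\bigr)=\eta q_i\bigl(Z_i^2-H\bigr).
\]
\item Multiplying by $-2\eta$ yields $-2\eta^2q_{t-1,i}(Z_{t-1,i}^2-H_{t-1})$.
\end{enumerate}
Summing the three contributions reproduces exactly the coefficient vector in \eqref{eq:played-distribution}, so $G_2'(0)=-\langle p_t,z\rangle$.

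It remains to show that $p_t$ is a genuine $\cF_{t-1}$-measurable distribution. Measurability holds because $p_t$ depends only on $q_{t-1}$, $y_{t-1}$ and the deterministic sign $s_t$, all known before $\ell_t$ is revealed. Unit mass follows from the translation identities for $U$ stated in the text; one can also check directly that $\sum_iq_iZ_i=0$ and $\sum_iq_i(Z_i^2-H)=0$. Nonnegativity at $\eta=1/10$ is Proposition~\ref{prop:expert-feasibility}. As a sanity check, $|Z_i|\le2$ and $H\le1$ give $1-0.8-0.08>0$ coordinatewise.

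The main obstacle is the curvature bound itself, namely the cross terms that arise because $q[\eta R(\theta)]$ moves along the path. Since Proposition~\ref{prop:parametric-corrected-potential} is available, the remaining risk is only verifying that its hypotheses (bounded $y(\theta)$, an arbitrary direction $z$) match the lemma's setting, which the convex-combination argument in part~1 handles.
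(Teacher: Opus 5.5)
Your proposal is correct and follows the paper's proof: part~1 is exactly the paper's argument, namely Proposition~\ref{prop:parametric-corrected-potential} with $\alpha=2$ giving coefficient $-67/500$ at $\eta=1/10$. For part~2 you compute $\nabla_R U$ and $\nabla_y U$ directly instead of specializing \eqref{eq:G-alpha-first-derivative} and regrouping the coefficients of $z_i$, which is an equivalent routine calculation mirroring the main-text derivation of $\widetilde p_t^{(\alpha)}$; the only nit is that the lower bound $1-8\eta-8\eta^2$ in your feasibility sanity check needs $H_{t-1}\ge 0$, not $H_{t-1}\le 1$.
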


The complete proof of Lemma~\ref{lem:correction-potential-concavity} is deferred to Appendix~\ref{app:expert-potential}. 
Applying Lemma~\ref{lem:correction-potential-concavity} to the realized direction $z=z_t$, we know that
\begin{align}
\langle p_t,z_t\rangle
\le
U(R_{t-1},y_{t-1})-U(R_t,y_t) =
F(R_{t-1})-F(R_t)+C_t-C_{t-1},
\label{eq:corrected-potential-inequality}
\end{align}
which is exactly the target inequality \eqref{eq:sufficient-credit-condition} that we want.

\paragraph{Regret guarantee.}
Having derived the correction and the played distribution, we now state
the resulting alternating-regret guarantee.

\begin{theorem}
\label{thm:expert-upper}
For every adaptive sequence
$\ell_1,\ldots,\ell_T\in[-1,1]^d$, Algorithm~\ref{alg:althedge} with
$\eta=1/10$ satisfies $\AltRegExp_T\le10\log d$.
\end{theorem}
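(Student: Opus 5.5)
The plan is to apply Lemma~\ref{lem:correction-potential-concavity} round by round, telescope the resulting one-step inequality \eqref{eq:corrected-potential-inequality}, and finish with the elementary lower bound on the log-partition function $F$.

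\textbf{Step 1: the lemma applies in every round.} Fix $t\in\{1,\ldots,T+1\}$. The state $(R_{t-1},y_{t-1})$ is determined by the losses revealed before $p_t$ is chosen, and $y_{t-1}\in[-1,1]^d$. Take the realized direction $z=z_t=\ell_{t-1}+\ell_t$. Since $s_{t-1}=-s_t$,
\[
y_{t-1}+s_tz_t=s_{t-1}\ell_{t-1}+s_t\ell_{t-1}+s_t\ell_t=s_t\ell_t=y_t\in[-1,1]^d .
\]
This also covers the terminal round, where $y_{T+1}=\mathbf{0}$ and $R_{T+1}=R_T-z_{T+1}$, and round $t=1$, where $y_0=\mathbf{0}$. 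Because the lemma holds for every admissible $z$, it does not matter that $\ell_t$ is chosen adaptively after $p_t$; the bound is pathwise.

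\textbf{Step 2: the one-step inequality.} Along the path, $R(1)=R_t$ and $y(1)=y_t$. The lemma gives that $G_2$ is concave with $G_2'(0)=-\langle p_t,z_t\rangle$, so
\[
G_2(1)-G_2(0)\le G_2'(0).
\]
With $C_t=2\eta\Var_{q_t}(y_t)$, this is exactly \eqref{eq:corrected-potential-inequality}:
\[
\langle p_t,z_t\rangle\le F(R_{t-1})-F(R_t)+C_t-C_{t-1}.
\]

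\textbf{Step 3: telescoping.} Sum over $t=1,\ldots,T+1$. We have $C_0=C_{T+1}=0$ because $y_0=y_{T+1}=\mathbf{0}$, and $F(R_0)=F(\mathbf{0})=0$. Hence
\[
\sum_{t=1}^{T+1}\langle p_t,z_t\rangle\le -F(R_{T+1}).
\]

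\textbf{Step 4: comparing to any $u\in\Delta_d$.} From $\tfrac1d\sum_j e^{\eta R_j}\ge \tfrac1d e^{\eta R_i}$ we get $F(R)\ge R_i-(\log d)/\eta$ for every $i$. Also $R_{T+1}=-\sum_{t=1}^{T+1} z_t$. Therefore, for each expert $i$,
\[
\sum_{t=1}^{T+1}\langle p_t-e_i,z_t\rangle\le \frac{\log d}{\eta}=10\log d .
\]
The left-hand side is linear in the comparator, so the maximum over $u\in\Delta_d$ is attained at a vertex, which gives $\AltRegExp_T\le 10\log d$. Note that $p_t$ is a valid distribution by Proposition~\ref{prop:expert-feasibility}, so each $\langle p_t,z_t\rangle$ is a genuine expected loss.

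\textbf{Main obstacle.} The substantive difficulty sits entirely inside Lemma~\ref{lem:correction-potential-concavity}: the curvature bound \eqref{eq:parametric-curvature-bound}, which must account for $q[\eta R(\theta)]$ moving along the path, and the nonnegativity of $p_t$. Both are stated earlier in the paper and may be assumed here. Within the present proof, the only delicate points are the boundary bookkeeping at $t=1$ and $t=T+1$ and the check that $y(1)$ equals $y_t$, both handled in Step 1.
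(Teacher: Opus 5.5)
Your proposal is correct and follows essentially the same route as the paper. Both arguments telescope the one-step inequality \eqref{eq:corrected-potential-inequality} from Lemma~\ref{lem:correction-potential-concavity}, use $U(R_0,y_0)=0$ and $U(R_{T+1},y_{T+1})=F(R_{T+1})$, and finish with $F(R)\ge\max_i R_i-(\log d)/\eta$. Your explicit check that $y_{t-1}+s_tz_t=y_t\in[-1,1]^d$, including at the endpoint rounds $t=1$ and $t=T+1$, spells out the lemma's hypothesis, which the paper uses implicitly when it applies the lemma to the realized direction $z=z_t$.
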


\begin{proof}
Summing \eqref{eq:corrected-potential-inequality} over
$t=1,\ldots,T+1$ yields
$\sum_{t=1}^{T+1}\langle p_t,z_t\rangle
\le U(R_0,y_0)-U(R_{T+1},y_{T+1})$.
Since $R_0=y_0=\mathbf{0}$, we have $U(R_0,y_0)=0$. Moreover,
$y_{T+1}=\mathbf{0}$, so
$U(R_{T+1},y_{T+1})=F(R_{T+1})$. Therefore,
$\sum_{t=1}^{T+1}\langle p_t,z_t\rangle\le-F(R_{T+1})$.
Using the effective-loss representation of alternating regret,
\begin{align*}
\AltRegExp_T
=
\sum_{t=1}^{T+1}\langle p_t,z_t\rangle
+\max_{i\in[d]}R_{T+1,i} \le
-F(R_{T+1})+\max_{i\in[d]}R_{T+1,i}
\le
\frac{\log d}{\eta}.
\end{align*}
Substituting $\eta=1/10$ completes the proof.
\end{proof}

To our knowledge, Theorem~\ref{thm:expert-upper} gives the first
alternating-regret guarantee for the expert problem that is independent of
the horizon $T$. Together with the $\Omega(\log d)$ lower bound in
Theorem~\ref{thm:expert-lower}, it establishes the minimax alternating
regret as $\Theta(\log d)$ and improves the previous
$O(T^{1/3}\log^{2/3} d)$ upper bound of~\citet{hait2025alternating}. The theorem also has an immediate implication for learning in games. By
applying the reduction of \citet{hait2025alternating}, the logarithmic
alternating-regret bound yields an $\order(\frac{\log d}{T})$ convergence rate for alternating learning in finite games. This matches the average-iterate convergence rate achieved by optimistic multiplicative-weight update under simultaneous play~\citep{rakhlin2013predictable,syrgkanis2015fast}.

\begin{corollary}
\label{cor:expert-games}
Consider a two-player normal-form game in which both players’ losses are normalized to $[-1,1]$ and the two players have
$d_x$ and $d_y$ actions, respectively, and suppose both players use
Algorithm~\ref{alg:althedge} with $\eta=1/10$ in the alternating learning
dynamic of \citet{hait2025alternating}. Then the empirical distribution
assigning mass $1/(2T)$ to each of the $2T$ indexed profiles
$
  \{(x_t,y_t),(x_{t+1},y_t)\}_{t\in[T]}
$
is an $\varepsilon_{\mathrm{CCE}}$-approximate coarse correlated equilibrium
with
$
  \varepsilon_{\mathrm{CCE}}
  =
  \order\left(
    \frac{\log (d_xd_y)
    }{T}
  \right).
$
If the game is zero-sum, the time-averaged strategies form an
$\varepsilon_{\mathrm{NE}}$-approximate Nash equilibrium with
$
  \varepsilon_{\mathrm{NE}}
  =
  \order\left(
    \frac{
      \log (d_xd_y)
    }{T}
  \right).
$
\end{corollary}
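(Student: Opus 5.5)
The plan is to apply Theorem~\ref{thm:expert-upper} to each player separately and then invoke the alternating-regret-to-equilibrium reduction of \citet{hait2025alternating}.

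\emph{Setting up each player's loss sequence.} In the alternating dynamic, player $x$ plays $x_t$. Player $y$ then plays $y_t$ after seeing $x_t$. Player $x$ then updates to $x_{t+1}$ after seeing $y_t$. The loss vector faced by player $x$ at round $t$ is $\ell^x_t = A y_t \in [-1,1]^{d_x}$, where $A$ is $x$'s loss matrix. The loss vector faced by player $y$ is $\ell^y_t = B^\top x_{t+1}$, or $B^\top x_t$ depending on the indexing convention of \citet{hait2025alternating}. Both sequences are adaptive and lie in $[-1,1]^{d}$. Theorem~\ref{thm:expert-upper} holds against every adaptive sequence, so each player's alternating regret is at most $10\log d_x$ and $10\log d_y$, respectively. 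The point to check is the bookkeeping: the pairs $(x_t,y_t)$ and $(x_{t+1},y_t)$ must be exactly the two evaluations of the loss $\ell^x_t$ appearing in $\AltRegExp_T$. For player $y$, a shift of index is needed so that its two evaluation points, with its current action $y_t$ or $y_{t+1}$, against the same $x$ match the listed profiles. Any boundary mismatch costs only $O(1)$ additive terms, since losses lie in $[-1,1]$.

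\emph{CCE claim.} For any deviation $u$ of player $x$, the expected gain from deviating under the empirical distribution is
\[
\frac{1}{2T}\sum_{t=1}^T\bigl(\ip{x_t}{Ay_t}+\ip{x_{t+1}}{Ay_t}-2\ip{u}{Ay_t}\bigr)=\frac{\AltRegExp_T(u)}{2T}.
\]
This is at most $5\log d_x/T$. The same holds for player $y$, up to the $O(1/T)$ boundary terms. Since $\log d_x+\log d_y=\log(d_xd_y)$, this gives $\varepsilon_{\mathrm{CCE}}=\order(\log(d_xd_y)/T)$.

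\emph{Zero-sum claim.} Take $B=-A$ and let $\bar x,\bar y$ be the time averages, as specified by the reduction. Add the two players' alternating-regret bounds. The cross terms $\ip{x_t}{Ay_t}$ and $\ip{x_{t+1}}{Ay_t}$ cancel between the players, which leaves
\[
\max_{y}\ip{\bar x}{Ay}-\min_x\ip{x}{A\bar y}\le \order\!\left(\frac{\log d_x+\log d_y}{T}\right).
\]
This is the duality gap, and it yields the $\varepsilon_{\mathrm{NE}}$ bound. The main obstacle is the exact index alignment in the cancellation: which averages to use, for instance whether to average $x_t$ or $x_{t+1}$. I would handle it by following the reduction lemma of \citet{hait2025alternating} verbatim and treating any endpoint mismatch as an $O(1/T)$ error.
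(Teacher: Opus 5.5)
Your proposal is correct and takes the same route as the paper. The paper obtains the corollary by feeding the horizon-independent, adaptive-adversary bound of Theorem~\ref{thm:expert-upper} for each player into the alternating-regret-to-equilibrium reduction of \citet{hait2025alternating}, and your CCE and duality-gap computations are that reduction written out, including the $O(1)$ boundary terms from re-indexing the second player (legitimate because Algorithm~\ref{alg:althedge} applies the same update at every round, so its bound holds at every horizon).
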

\section{Alternating Regret for Online Convex Optimization}
\label{sec:oco}

Now we shift our focus to the more general OCO problem. In
Section~\ref{sec:oco-lower}, we first explain the lower-bound construction
and state the result. Compared with the expert case, the main additional
difficulty is that convexity allows the learner to hedge continuously
between candidate comparators. We then give a matching upper bound in
Section~\ref{sec:oco-upper} by extending the finite exponential-weights
distribution to the continuous setting.

\subsection{Lower bound}
\label{sec:oco-lower}

The lower-bound construction can be viewed as a geometric analogue of the expert lower bound. In the expert problem, at each stage we partition the surviving experts into two equal subsets and keep the subset on which the learner places less probability mass. A linear loss can assign a small loss to every expert in the retained subset and a large loss to every expert in the other subset, so the learner incurs a constant loss relative to any expert that remains. Repeating this elimination over logarithmically many stages gives the logarithmic lower bound. However, this argument does not extend directly to a fixed-dimensional convex domain. If the candidate comparators are represented by points in a convex set, an arbitrary assignment of small losses to one subset and large losses to another may not be realizable by a convex function. In particular, the convex hulls of the two subsets may overlap, in which case a point in their intersection can be competitive regardless of which subset is assigned the smaller loss.

We overcome this issue on the two-dimensional unit disk by providing a different hard instance construction. Fix a boundary point \(w\) on the two-dimensional unit disk, and let \(u\) be a small counterclockwise rotation of \(w\)
satisfying
$
  1-\langle w,u\rangle=\delta>0.
$
We use two convex losses $\mathsf H^{(1)}_w$ and $\mathsf H^{(2)}_{w,\delta}$. The first one $\mathsf H^{(1)}_w$ mildly favors staying at \(w\), while
the second one $\mathsf H^{(2)}_{w,\delta}$ strongly favors moving from \(w\) to \(u\). Their values at the
two relevant points are
$$
\begin{array}{c|cc}
  & w & u\\ \hline
  \mathsf H^{(1)}_w & 0 & \delta/2\\
  \mathsf H^{(2)}_{w,\delta} & 1/2 & 0 .
\end{array}
$$
The second loss is sampled with probability
$
  p=\delta/(1+\delta),
$
and the first loss is sampled otherwise. This choice ensures that, before the loss is revealed, every point in the disk has expected loss at least $p/2$. Thus the learner cannot avoid the uncertainty by choosing an intermediate point between \(w\) and \(u\).

This gives a two-point hard instance at a single spatial scale $\delta$. To amplify the lower bound, we repeat the construction over a sequence of epochs with geometrically decreasing scales $\delta_1,\delta_2,\ldots,\delta_{J}$. In epoch $j\in[J]$, we use the corresponding losses $\mathsf H^{(1)}_{w_j}$ and $\mathsf H^{(2)}_{w_j,\delta_j}$ for
$
n_j=\Theta(1/\delta_j)
$
rounds. If no $\mathsf H^{(2)}_{w_j,\delta_j}$ loss appears, the reference point remains at $w_j$; otherwise it moves to a nearby boundary point $w_{j+1}$. This creates a constant expected loss gap in each epoch. However, these gaps cannot be summed directly because regret is measured against a single fixed comparator. We therefore decrease the scales geometrically, so that all later movements are much smaller than the movement at the current epoch. As a result, the final reference point remains close enough to every earlier reference point that its additional loss on each previous epoch is only a small constant. Hence the constant gaps from different epochs can be accumulated against one fixed comparator. Since the epoch lengths $n_j$ grow geometrically, only $\Theta(\log T)$ epochs fit within $T$ rounds, yielding an $\Omega(\log T)$ lower bound in two dimensions. Finally, taking a product of $\Theta
(d)$ such disks and assigning a separate time segment to each coordinate pair yields
$
\Omega\left(d\log(1+\frac{T}{d})\right).
$

Formally, we have the following theorem and we defer the complete proof to Appendix~\ref{app:oco-lower}.

\begin{theorem} \label{thm:oco-lower} There exists a universal constant $c>0$ such that, for every $d\ge2$ and $T\ge1$, there exists a $d$-dimensional compact convex set $\cX$ with the following property: for every possibly randomized learner, there exists an oblivious sequence of continuous convex loss functions $f_1,\ldots,f_T:\cX\to[0,1]$ such that \begin{align} \E[\AltReg_T] \geq c\cdot d\log\left(1+\frac{T}{d}\right), \label{eq:oco-lower-main} \end{align} where the expectation is over the learner's internal randomness. \end{theorem}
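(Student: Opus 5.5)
I will follow the multiscale construction sketched above: first prove a two-dimensional bound $\Omega(\log(1+T))$ on the unit disk, then lift it to dimension $d$ by a product. The concrete losses are as follows. For a boundary point $w$ and scale $\delta$, set $\mathsf H^{(1)}_w(x)=\tfrac12(1-\langle w,x\rangle)$, which vanishes at $w$ and equals $\delta/2$ at $u$. Take $\mathsf H^{(2)}_{w,\delta}(x)=\tfrac12\bigl(1-\langle u,x\rangle\bigr)/\delta$, clipped to $[0,1]$ by taking $\min\{\cdot,1\}$. Clipping would destroy convexity, so I will instead use $\max\{0,\cdot\}$ of an affine function that is normalized to equal $1/2$ at $w$ and $0$ at $u$. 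Both losses are then convex, continuous and $[0,1]$-valued on the disk. The key one-round inequality is that for every $x$ in the disk,
\[
(1-p)\,\mathsf H^{(1)}_w(x)+p\,\mathsf H^{(2)}_{w,\delta}(x)\ \ge\ p/2,
\qquad p=\delta/(1+\delta).
\]
I will verify it by checking the two affine pieces: the minimum of a convex combination of these functions over the disk is attained on the segment or arc between $w$ and $u$. Since $x_t$ is $\cF_{t-1}$-measurable and the loss at round $t$ is sampled fresh, this gives $\E[f_t(x_t)]\ge p/2$, while the term $f_t(x_{t+1})$ is nonnegative.

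Within epoch $j$, the adversary uses $n_j=\lceil c_0/\delta_j\rceil$ rounds. Each round draws $\mathsf H^{(2)}$ with probability $p_j$, independently, until the first $\mathsf H^{(2)}$ appears. Thereafter, to keep the process simple, the remaining rounds of the epoch use $\mathsf H^{(1)}_{w_{j+1}}$, where the reference point is $w_{j+1}=u_j$ if a jump occurred and $w_j$ otherwise. The learner's expected loss in the epoch is then $\gtrsim n_jp_j=\Theta(1)$. The reference point pays about $\Pr[\text{jump}]\cdot 0$ on the jump round plus $O(n_j\delta_j)$-small terms elsewhere. Tuning $c_0$ leaves a constant expected gap $\gamma>0$ per epoch against $w_{j+1}$. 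Both the learner's $f_t(x_t)$ and $f_t(x_{t+1})$ enter, but I only need the lower bound $2\gamma$ on the doubled learner sum minus twice the reference loss. I will set this up as a per-round comparison to the running reference point.

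The main obstacle is the single-comparator issue. The final point $w_{J+1}$ differs from $w_{j+1}$ by the sum of later rotations, with angles $\sqrt{2\delta_k}$ for $k>j$. Choosing $\delta_{k+1}=\delta_k/M$ for a large constant $M$ makes $1-\langle w_{j+1},w_{J+1}\rangle\le C\delta_{j+1}/M$. Because each loss in epoch $j$ is $O(1/\delta_j)$-Lipschitz in the relevant direction, the extra comparator loss summed over the epoch is $O(n_j\cdot \delta_j^{-1}\cdot\delta_{j+1}/M)$, which needs care. For the $\mathsf H^{(1)}$ rounds the loss is $\tfrac12(1-\langle w,x\rangle)$, and near $w$ this is quadratic in the displacement, so the extra loss is $O(n_j\,\delta_{j+1}/M)=O(1/M^2)$. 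For the at most one $\mathsf H^{(2)}$ round the extra loss is $O(\sqrt{\delta_{j+1}/M}/\sqrt{\delta_j})$, also small. I also need the rotations to go counterclockwise consistently, so that distances add along the arc and the comparator stays within the $\max\{0,\cdot\}$ region where $\mathsf H^{(2)}$ is $0$ or small. With $M$ a large constant, the per-epoch gap stays $\ge\gamma/2$. Since $\sum_j n_j=\Theta(M^J/\delta_1)\le T$ with $\delta_1$ constant, we get $J=\Theta(\log(1+T))$. Averaging over the random adversary as in Theorem~\ref{thm:expert-lower}, together with the $\max_u$, gives a fixed oblivious sequence with $\E[\AltReg_T]\ge c\log(1+T)$.

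For general $d$, take $\cX=D^{\lfloor d/2\rfloor}$, times $[0,1]$ if $d$ is odd, and split $T$ into $\lfloor d/2\rfloor$ segments of length $T/\lfloor d/2\rfloor$. In segment $k$, the loss depends only on the $k$-th disk coordinate. The comparator can be chosen coordinatewise, so regrets add. Note that at the segment boundary, $f_t(x_{t+1})$ only involves the active coordinate and remains nonnegative. This yields $c\,d\log(1+T/d)$. For $T<d$, I will use only $T$ disks with one epoch each.
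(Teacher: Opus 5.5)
Your plan follows the paper's route. It uses the pair $\mathsf H^{(1)}_w,\mathsf H^{(2)}_{w,\delta}$ mixed with $p=\delta/(1+\delta)$, epochs of length $\Theta(1/\delta_j)$ at geometrically shrinking scales, and a forward rotation only when an $\mathsf H^{(2)}$ occurs. It then links every epoch to the final reference point and takes a product of disks; freezing the randomness after the first $\mathsf H^{(2)}$ is a harmless variant. However, several quantitative steps fail as written.

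\textbf{Choice of $\mathsf H^{(2)}$.} It is not an arbitrary $\max\{0,\text{affine}\}$ taking the values $1/2$ at $w$ and $0$ at $u$. The mixture equals $p/2$ at $w$, so $w$ must minimize the mixture over the disk. This forces the affine piece to have gradient parallel to $w$, i.e.\ $\mathsf H^{(2)}_{w,\delta}(x)=\frac12\max\{0,1-(1-\ip{w}{x})/\delta\}$. With this choice the mixture is identically $p/2$ on the cap $\{1-\ip{w}{x}\le\delta\}$ and at least $(1-p)\delta/2=p/2$ off it. Your first, $u$-centered candidate gives the mixture $\frac{2-\ip{u+w}{x}}{2(1+\delta)}$, which drops below $p/2$ at the boundary point in direction $u+w$.

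\textbf{Per-epoch gap.} The gap cannot come from bounding the reference's cost by $O(n_j\delta_j)$. That doubled bound is about $n_jp_j$, which is twice the learner's lower bound $\approx n_jp_j/2$ for every $c_0$, so tuning $c_0$ does nothing. What actually works is that the reference pays $\delta_j/2$ per pre-jump round only on the event that a jump occurs. In your variant, let $\tau$ be the first jump time. The learner pays at least $\frac{p_j}{2}\E[\min(\tau,n_j)]=\frac{1-(1-p_j)^{n_j}}{2}$. The reference's doubled cost is $\delta_j\E[(\tau-1)\mathbf 1\{\tau\le n_j\}]=1-(1-p_j)^{n_j-1}(1+(n_j-1)p_j)$. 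For small $c_0$ these are about $c_0/2$ and $c_0^2/2$ respectively. The paper's exact binomial computation gives the gap $\frac{n_jp_j}{2}\bigl(\frac{2a_j}{1-p_j}-1\bigr)$ with $a_j=(1-p_j)^{n_j}>1/2$.

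\textbf{Linking estimate.} The quantity $1-\ip{w_{j+1}}{w_{J+1}}$ is of order $\delta_{j+1}$, not $O(\delta_{j+1}/M)$; a single jump in epoch $j+1$ already makes it $\delta_{j+1}$. More importantly, in the jump case the pre-jump $\mathsf H^{(1)}_{w_j}$ rounds are evaluated near $u_j$, at angle $\alpha_j\approx\sqrt{2\delta_j}$ from their minimizer $w_j$. The extra cost there is therefore first order, about $n_j\beta(\alpha_j+\beta)$, which is of order $c_0/\sqrt M$ rather than $O(1/M^2)$. This still vanishes as $M\to\infty$ (the paper uses $M=2^{18}$), so the conclusion survives once the bound is corrected.

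\textbf{Short horizons.} ``One epoch per disk'' only fits in $T$ rounds if epochs have length one. A single round already yields gap $p_1/2$, comparing with $u$ or $w$ according to the realized loss. Alternatively, use the paper's Rademacher construction on the cube.
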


\subsection{Upper bound: a continuous correction algorithm}
\label{sec:oco-upper}

We now turn to the algorithm design for obtaining a matching upper bound. Our proposed algorithm, Algorithm~\ref{alg:continuous-althedge}, is essentially the continuous counterpart of Algorithm~\ref{alg:althedge}. The score vector $R_t$ is replaced by a score function $R_t:\cX\to\R$, the exponential-weights distribution is replaced by the density $q_t=q[\eta R_t]$, and finite expectations are replaced by integrals over $\cX$. Apart from these changes, the two correction terms in \eqref{eq:continuous-played-density} are identical to those in Algorithm~\ref{alg:althedge}.

\begin{algorithm}[h]
\caption{Continuous Corrected AltHedge}
\label{alg:continuous-althedge}
\DontPrintSemicolon
\KwIn{a $d$-dimensional compact convex set $\cX$ and learning rate $\eta>0$}
Set $f_0=y_0=R_0=0$ and $q_0(x)=1/\vol(\cX)$.\;
\For{$t=1,\ldots,T$}{
  Set $s_t=(-1)^t$ and compute
  \begin{align*}
    \mu_{t-1}
    &\triangleq
    \int_{\cX}y_{t-1}(x)q_{t-1}(x)\,dx,\\
    Z_{t-1}(x)
    &\triangleq
    y_{t-1}(x)-\mu_{t-1},\\
    H_{t-1}
    &\triangleq
    \int_{\cX}Z_{t-1}(x)^2q_{t-1}(x)\,dx.
  \end{align*}

  Compute $p_t\in\Delta_{\cX}$ according to
  \begin{align}
    p_t(x)
    \triangleq
    q_{t-1}(x)
    \left[
      1
      +4\eta s_t Z_{t-1}(x)
      -2\eta^2\bigl(Z_{t-1}(x)^2-H_{t-1}\bigr)
    \right].
    \label{eq:continuous-played-density}
  \end{align}

  Play
  $x_t=
  \begin{cases}
    \displaystyle \int_{\cX}x p_t(x)\,dx, & \text{\rm (Option I)},\\[2mm]
    X_t,\quad X_t\sim p_t, & \text{\rm (Option II)},
  \end{cases}$
  and observe $f_t:\cX\to[-1,1]$.\;

  Set $y_t=s_tf_t$ and $R_t=R_{t-1}-(f_t+f_{t-1})$, and set $q_t=q[\eta R_t]$.
}
\end{algorithm}

As in the expert case, both correction terms in \eqref{eq:continuous-played-density} integrate to zero under $q_{t-1}$, and for $\eta=1/10$ the multiplicative correction factor lies in $[3/25,2]$. Hence $p_t$ is a valid probability density, as formally verified in Proposition~\ref{prop:continuous-feasibility}. After constructing the corrected density $p_t$, there are two natural ways to turn it into an action. Option I plays its mean, $x_t=\int_{\cX}x p_t(x)dx\in\cX$, while Option II samples $x_t\sim p_t$. The first choice gives a deterministic strategy and a deterministic regret guarantee, but requires computing the mean of $p_t$, which is generally an integration problem. The second avoids this computation and only requires sampling from $p_t$, but its regret guarantee holds only in expectation.

In particular, for Option II, sampling is particularly convenient. Since $R_t$ is concave, $q_t$ is log-concave. Given the exact values of $\mu_{t-1}$ and $H_{t-1}$, a sample from $p_t$ can be obtained by rejection sampling from $q_{t-1}$ with constant acceptance probability. Thus Option II incurs only a constant rejection-sampling overhead beyond sampling from the underlying continuous exponential-weights density. A more concrete discussion is deferred to Appendix~\ref{app:oco-upper-details}. By contrast, implementing Option I requires additionally estimating the mean of $p_t$, for example using repeated samples. Our regret analysis below considers the exact versions of both options.

For the analysis, we use the same endpoint convention as in the expert case: set $f_{T+1}=y_{T+1}=0$, $R_{T+1}=R_T-f_T$, and $s_{T+1}=(-1)^{T+1}$. We compute $\mu_T,Z_T,H_T$, and $p_{T+1}$ from $(q_T,y_T)$ using the same formulas as above. Under Option I, set $x_{T+1}=\int_{\cX}x p_{T+1}(x)\,dx$; under Option II, sample $x_{T+1}\sim p_{T+1}$. The guarantee of Algorithm~\ref{alg:continuous-althedge} is formally stated below. Both options achieve the minimax rate from Theorem~\ref{thm:oco-lower}.

\begin{theorem}
\label{thm:oco-upper}
For every $d$-dimensional compact convex set $\cX$ and every adaptive sequence of convex losses $f_1,\ldots,f_T:\cX\to[-1,1]$, Algorithm~\ref{alg:continuous-althedge} with $\eta=1/10$ satisfies
\begin{align*}
  \AltReg_T
  &\le
  \order\left(d\log\left(1+\frac{T}{d}\right)\right)
  &&\text{under Option I},\\
  \E[\AltReg_T]
  &\le
  \order\left(d\log\left(1+\frac{T}{d}\right)\right)
  &&\text{under Option II},
\end{align*}
where the expectation in the second bound is over the learner's internal randomness.
\end{theorem}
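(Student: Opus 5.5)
The plan is to reuse the corrected-potential telescoping argument of the expert case verbatim in the continuous setting. Then the comparator term $\log d/\eta$ is replaced by a volume argument in the style of continuous exponential weights. First, I would establish the continuous analogue of Lemma~\ref{lem:correction-potential-concavity}. Define $F(R)=\frac1\eta\log\bigl(\frac{1}{\vol(\cX)}\int_\cX e^{\eta R(x)}dx\bigr)$ and $U(R,y)=F(R)-2\eta\Var_{q[\eta R]}(y)$. Along the path $R(\theta)=R_{t-1}-\theta h_t$, $y(\theta)=y_{t-1}+\theta s_t h_t$, the second-derivative computation behind \eqref{eq:parametric-curvature-bound} only uses expectations and variances under $q[\eta R(\theta)]$ and the bound $y(\theta)\in[-1,1]$. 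Since $y(\theta)=(1-\theta)y_{t-1}+\theta y_t$ with $y_{t-1},y_t$ valued in $[-1,1]$, this bound holds pointwise. The computation therefore transfers with sums replaced by integrals, justified by dominated convergence since all functions are bounded on compact $\cX$. With $\eta=1/10$ this gives concavity of $G(\theta)=U(R(\theta),y(\theta))$. Differentiating at $\theta=0$ gives $G'(0)=-\int p_t h_t$, with $p_t$ as in \eqref{eq:continuous-played-density}, which is a density by Proposition~\ref{prop:continuous-feasibility}. Telescoping as in Theorem~\ref{thm:expert-upper}, using $U(R_0,y_0)=0$ and $y_{T+1}=0$, yields
\begin{equation*}
\sum_{t=1}^{T+1}\E_{x\sim p_t}[h_t(x)]\le -F(R_{T+1}).
\end{equation*}

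Second, I would convert this into the action-level statement. Under Option II, $\E[h_t(x_t)\mid\cF_{t-1}]=\E_{p_t}[h_t]$ because $x_t$ is drawn with fresh randomness and $h_t=f_{t-1}+f_t$ is fixed before that draw. Under Option I, Jensen gives $h_t(x_t)\le\E_{p_t}[h_t]$, since $h_t$ is convex as a sum of convex functions. Hence in both cases the learner's total effective loss is at most $-F(R_{T+1})$, deterministically or in expectation. It remains to show $-F(R_{T+1})-\sum_t h_t(u)\le O(d\log(1+T/d))$ for every $u$, which, since $R_{T+1}=-\sum_t h_t$, is the same as $R_{T+1}(u)-F(R_{T+1})\le O(d\log(1+T/d))$.

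Third, the comparator step. For $\gamma\in(0,1]$, set $K_\gamma=(1-\gamma)u+\gamma\cX$, which satisfies $\vol(K_\gamma)=\gamma^d\vol(\cX)$. By convexity, each $x=(1-\gamma)u+\gamma v\in K_\gamma$ has $h_t(x)\le(1-\gamma)h_t(u)+\gamma h_t(v)$. Since $|h_t|\le2$ and there are $T+1$ rounds, this gives $R_{T+1}(x)\ge R_{T+1}(u)-4\gamma(T+1)$ on $K_\gamma$. Restricting the integral in $F$ to $K_\gamma$ then gives $F(R_{T+1})\ge R_{T+1}(u)-4\gamma(T+1)-\frac{d}{\eta}\log\frac1\gamma$. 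Choosing $\gamma=\min\{1,d/(T+1)\}$ gives a bound of $O(d)+\frac{d}{\eta}\log(1+\frac{T+1}{d})=O(d\log(1+T/d))$ for $\eta=1/10$, using $d\ge1$ to absorb the $O(d)$ term.

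The main obstacle is the first step: certifying that the full second-derivative bound \eqref{eq:parametric-curvature-bound} is dimension-free, so that it holds for densities. That bound involves third and fourth centered moments of $y$ and $h_t$ under the moving measure $q[\eta R(\theta)]$. I would redo the proof of Proposition~\ref{prop:parametric-corrected-potential} abstractly for an arbitrary probability measure, noting that it only uses Cauchy--Schwarz and $|y(\theta)-\E y(\theta)|\le2$, and never the finiteness of the support.
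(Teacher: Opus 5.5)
Your proposal is correct and is essentially the paper's proof. It has the same ingredients: the continuous corrected-potential inequality (Lemma~\ref{lem:continuous-corrected-potential}) telescoped to $-F(R_{T+1})$, the tower property for Option II and Jensen for Option I, and the scaled-copy comparator bound via $(1-\gamma)u+\gamma\cX$ (Lemma~\ref{lem:continuous-comparator-bound}).

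The only slip is in your last line. When $T+1\le d$, an additive $O(d)$ term cannot be absorbed into $d\log(1+T/d)\approx T$, so ``using $d\ge1$'' does not justify the final bound. With your choice $\gamma=1$ in that regime, however, the term is really $4(T+1)\le 8T=O(d\log(1+T/d))$. This is exactly what the paper's separate short-horizon case, using the trivial bound $\AltReg_T\le 4T$, handles.
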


Theorem~\ref{thm:oco-upper} significantly improves the $\widetilde{\order}(d^{2/3}T^{1/3})$ alternating-regret guarantee of \citet{hait2025alternating} for general bounded convex losses. Together with Theorem~\ref{thm:oco-lower}, it characterizes the worst-case minimax alternating regret up to universal constants. Similar to the expert case, the connection between alternating regret and alternating learning dynamics established by \citet{hait2025alternating} immediately yields improved convergence rates in convex games. In particular, using Option I gives deterministic convergence guarantees.

\begin{corollary}
\label{cor:oco-games}
Consider a two-player game with action sets $\cX$ and $\cY$ of dimensions $d_x$ and $d_y$, respectively, and losses in $[-1,1]$ that are convex in each player's own action. Suppose both players use Option I of Algorithm~\ref{alg:continuous-althedge} with $\eta=1/10$ in the alternating learning dynamic of \citet{hait2025alternating}. Then the empirical distribution assigning mass $1/(2T)$ to each of the $2T$ indexed profiles $\{(x_t,y_t),(x_{t+1},y_t)\}_{t\in[T]}$ is an $\varepsilon_{\mathrm{CCE}}$-approximate coarse correlated equilibrium with
\begin{align*}
  \varepsilon_{\mathrm{CCE}}
  =
  \order\left(
    \frac{
        d_x\log(1+\frac{T}{d_x})+
        d_y\log(1+\frac{T}{d_y})
    }{T}
  \right).
\end{align*}
If the game is convex-concave and zero-sum, the time-averaged strategies form an $\varepsilon_{\mathrm{NE}}$-approximate Nash equilibrium with
\begin{align*}
  \varepsilon_{\mathrm{NE}}
  =
  \order\left(
    \frac{
      d_x\log(1+\frac{T}{d_x})
      +
      d_y\log(1+\frac{T}{d_y})
    }{T}
  \right).
\end{align*}
\end{corollary}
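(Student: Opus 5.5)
The plan is to prove the corollary by plugging Theorem~\ref{thm:oco-upper} (Option I) into the reduction of \citet{hait2025alternating}. I first recall the reduction in the form I need. In the alternating dynamic, player $x$ faces the losses $f_t=\ell_x(\cdot,y_t)$ and plays $x_t$ before seeing $y_t$. It then plays $x_{t+1}$ after seeing $y_t$, so each loss $\ell_x(\cdot,y_t)$ is evaluated at both $x_t$ and $x_{t+1}$. Player $y$ faces the losses $g_t=\ell_y(x_{t+1},\cdot)$ and responds with $y_{t+1}$. Since each player sees only its own loss sequence, it is an adaptive adversary from that player's point of view. All these losses are convex in the player's own action and take values in $[-1,1]$. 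Theorem~\ref{thm:oco-upper} with Option I is a deterministic bound against adaptive sequences, so it applies to each player separately. It gives $\AltReg^x_T\le \order(d_x\log(1+T/d_x))$ and $\AltReg^y_T\le\order(d_y\log(1+T/d_y))$, where for player $y$ the indexing is shifted by one round; this shift only changes boundary terms, which are $\order(1)$.

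For the CCE claim, I fix a deviation $x'\in\cX$ for player $x$ under the empirical distribution $\mu$ that puts mass $1/(2T)$ on each profile $(x_t,y_t)$ and $(x_{t+1},y_t)$. Then
\[
\E_{\mu}[\ell_x]-\E_{\mu}[\ell_x(x',\cdot)]=\frac1{2T}\sum_{t=1}^T\bigl(\ell_x(x_t,y_t)+\ell_x(x_{t+1},y_t)-2\ell_x(x',y_t)\bigr)=\frac{\AltReg^x_T(x')}{2T}.
\]
For player $y$, I regroup the same $2T$ profiles as pairs $(x_{t+1},y_t),(x_{t+1},y_{t+1})$, which is exactly player $y$'s alternating structure, up to the first and last profile. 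Those two boundary profiles cost $\order(1/T)$, since losses lie in $[-1,1]$. Dividing the bounds from the first step by $2T$ then gives $\varepsilon_{\mathrm{CCE}}$.

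For the zero-sum convex-concave case, I set $\ell_y=-\ell_x$. I take the time averages $\bar x=\frac1{2T}\sum_t(x_t+x_{t+1})$ and $\bar y=\frac1T\sum_t y_t$, with the appropriate version for player $y$. By Jensen, using convexity in $x$ and concavity in $y$, for any $x',y'$ I get
\[
\ell(\bar x,y')-\ell(x',\bar y)\le \E_\mu[\ell(\cdot,y')]-\E_\mu[\ell(x',\cdot)]\le \frac{\AltReg^x_T+\AltReg^y_T}{2T}+\order(1/T).
\]
Taking suprema over $x'$ and $y'$ bounds the duality gap, and hence $\varepsilon_{\mathrm{NE}}$.

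The main obstacle is the bookkeeping of the index shift between the two players' alternating sums. I must make sure that the $2T$ listed profiles match both players' alternating regrets up to $\order(1)$ boundary terms, and that the averages are defined consistently with the dynamic of \citet{hait2025alternating}. I would resolve this by citing their reduction lemma directly, which states precisely $\varepsilon\le(\AltReg^x_T+\AltReg^y_T)/(2T)$ plus lower-order terms. All remaining content is the input bound from Theorem~\ref{thm:oco-upper}.
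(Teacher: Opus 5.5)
Your proposal is correct and follows the paper's own route. The paper gives no separate argument beyond invoking the reduction of \citet{hait2025alternating} with the deterministic Option~I bound of Theorem~\ref{thm:oco-upper}. Your explicit bookkeeping faithfully unpacks that reduction: the $1/(2T)$ identity for player $x$, the regrouping for player $y$ with $\order(1)$ boundary profiles, and Jensen over the marginals of the empirical distribution in the zero-sum case. The resulting $\order(1/T)$ boundary terms are absorbed into the stated rate, since $d\log(1+T/d)\ge\log 2$ for all $d,T\ge1$.
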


\subsection{Proof of Theorem~\ref{thm:oco-upper}}
\label{sec:oco-upper-analysis}

The proof is almost the same as the expert analysis in Section~\ref{sec:expert-upper}. Recall that the expert proof uses the log-partition potential \eqref{eq:log-partition} and the corrected potential \eqref{eq:expert-potential}. Their continuous counterparts are obtained by replacing the uniform average over the $d$ experts by the normalized Lebesgue integral over $\cX$. Specifically, for a score function $R:\cX\to\R$, define
\begin{align}
  F(R)
  &\triangleq
  \frac{1}{\eta}
  \log\left(
    \frac{1}{\vol(\cX)}
    \int_{\cX}\exp(\eta R(x))\,dx
  \right),
  \label{eq:continuous-potential}\\
  U(R,y)
  &\triangleq
  F(R)-2\eta\Var_{x\sim q[\eta R]}(y(x)).
  \label{eq:continuous-corrected-potential}
\end{align}
Thus $F$ is the continuous analogue of \eqref{eq:log-partition}, while $U$ is the continuous analogue of \eqref{eq:expert-potential}.

The first ingredient is the direct continuous counterpart of the one-step corrected-potential inequality \eqref{eq:corrected-potential-inequality}. It shows that the same potential decrease controls the loss averaged under the corrected density.

\begin{lemma}
\label{lem:continuous-corrected-potential}
Define $h_t\triangleq f_{t-1}+f_t$ for $t\in[T+1]$. Then, for every $t\in[T+1]$,
\begin{align}
  \int_{\cX}p_t(x)h_t(x)\,dx
  \le
  U(R_{t-1},y_{t-1})-U(R_t,y_t).
  \label{eq:continuous-corrected-potential-inequality}
\end{align}
\end{lemma}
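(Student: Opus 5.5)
The plan is to transplant the expert-case argument (Lemma~\ref{lem:correction-potential-concavity}) to densities on $\cX$. Fix $t\in[T+1]$ and define the path $R(\theta)\triangleq R_{t-1}-\theta h_t$ and $y(\theta)\triangleq y_{t-1}+\theta s_t h_t$ for $\theta\in[0,1]$. We then set $G(\theta)\triangleq U(R(\theta),y(\theta))$ with $U$ as in \eqref{eq:continuous-corrected-potential}.

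At the endpoints, $R(1)=R_t$, and since $s_{t-1}=-s_t$,
\[
y(1)=s_{t-1}f_{t-1}+s_t(f_{t-1}+f_t)=s_tf_t=y_t .
\]
For $t=T+1$ this reads $y(1)=0=y_{T+1}$, using $f_{T+1}=0$. Moreover, $y(\theta)$ is a convex combination of $y_{t-1}$ and $y_t$, so it takes values in $[-1,1]$. Hence $G(0)=U(R_{t-1},y_{t-1})$ and $G(1)=U(R_t,y_t)$. It then suffices to show two things: that $G$ is concave on $[0,1]$, and that $G'(0)=-\int_{\cX}p_t h_t\,dx$. Concavity gives $G(1)-G(0)\le G'(0)$, which is exactly \eqref{eq:continuous-corrected-potential-inequality}.

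\textbf{Derivative at $0$.} Write $q_\theta=q[\eta R(\theta)]$ and use $\partial_\theta q_\theta=-\eta q_\theta\,(h_t-\E_{q_\theta}h_t)$. This gives
\[
\tfrac{d}{d\theta}F(R(\theta))=-\E_{q_\theta}[h_t].
\]
Write $V(\theta)=\Var_{q_\theta}(y(\theta))=\E_{q_\theta}[Z_\theta^2]$ with $Z_\theta=y(\theta)-\E_{q_\theta}y(\theta)$. Differentiating, the mean-shift term drops out because $\E_{q_\theta}Z_\theta=0$, leaving
\[
V'(\theta)=2s_t\E_{q_\theta}[Z_\theta h_t]-\eta\,\E_{q_\theta}\bigl[(Z_\theta^2-V)h_t\bigr].
\]
Evaluating at $\theta=0$ and multiplying by $-2\eta$ reproduces, after combining with the $F$ term, $-\int q_{t-1}\bigl[1+4\eta s_tZ_{t-1}-2\eta^2(Z_{t-1}^2-H_{t-1})\bigr]h_t$, which is $-\int p_th_t$ by \eqref{eq:continuous-played-density}. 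Differentiation under the integral is justified because $\cX$ is compact and $h_t,y$ are bounded measurable, so all integrands are uniformly bounded in $\theta$.

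\textbf{Concavity (main obstacle).} Concavity needs the full second-derivative bound \eqref{eq:parametric-curvature-bound}, now with $\Var_{q_\theta}(h_t)$. Rather than redoing it, I would observe that the computation behind Proposition~\ref{prop:parametric-corrected-potential} is purely algebraic in the moments of $q_\theta$. It uses only identities such as $\partial_\theta\E_{q}[g]=-\eta\,\mathrm{Cov}_q(g,h)$, together with bounds on centered moments that come from $|y(\theta)|\le1$ and from $h$ via $|Z|\le2$ and Cauchy--Schwarz. These steps hold verbatim for any probability measure, discrete or continuous. I would therefore rerun that proof with expectations as integrals against $q_\theta$, obtaining
\[
G''(\theta)\le\bigl[(1-2\alpha)\eta+8\alpha\eta^2+3\alpha\eta^3\bigr]\Var_{q_\theta}(h_t)\le0
\]
at $\alpha=2$, $\eta=1/10$. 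The delicate part is the third-order cross terms coming from the variation of $q_\theta$ inside $V''$. These take forms like $\eta\,\mathrm{Cov}(Z^2,h)$ and $\eta^2\,\E[(Z^2-V)(h-\bar h)^2]$, and each must be dominated by $\Var(h)$ using boundedness of $Z$ alone. Boundedness of $h$ cannot help here, since only $\Var(h)$ appears on the right-hand side. Once those bounds are checked to be measure-agnostic, the lemma follows.
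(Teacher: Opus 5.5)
Your proposal matches the paper's proof essentially step for step: the same interpolation $R(\theta)=R_{t-1}-\theta h_t$, $y(\theta)=y_{t-1}+\theta s_th_t$, the same endpoint check, and the same identification of $G'(0)$ with $-\int_{\cX}p_th_t\,dx$; concavity is likewise obtained by rerunning Proposition~\ref{prop:parametric-corrected-potential} with integrals against $q_\theta$, justified by the continuous moving-expectation identity. One small correction to how you describe the delicate terms: $\eta\,\mathrm{Cov}_{q_\theta}(Z_\theta^2,h_t)$ appears in $G'$, not $G''$, because $G''(\theta)$ is a quadratic form in the direction $h_t$, so every term in it is quadratic in $\widetilde h_\theta$ (e.g.\ $\E_{q_\theta}[\widetilde h_\theta^2\widetilde y_\theta]$), and this is exactly why boundedness of $y(\theta)$ alone lets you dominate everything by $\Var_{q_\theta}(h_t)$.
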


The proof uses the same interpolation as Lemma~\ref{lem:correction-potential-concavity}. The only additional analytic step is to justify differentiation under the integral; after this, the derivative calculation is identical to the expert case. The complete proof is given in Appendix~\ref{app:oco-upper-details}.

After telescoping \eqref{eq:corrected-potential-inequality}, the expert proof compares the terminal log-partition potential with the best expert through $\max_iR_{T+1,i}\le F(R_{T+1})+(\log d)/\eta$. An individual point $u\in\cX$ has zero volume, so this discrete comparison has no literal continuous analogue. The following lemma replaces it by comparing $u$ with a small scaled copy of $\cX$ around it.

\begin{lemma}
\label{lem:continuous-comparator-bound}
For every $u\in\cX$ and every $\delta\in(0,1)$,
\begin{align}
  F(R_{T+1})
  \ge
  R_{T+1}(u)-4T\delta-\frac{d}{\eta}\log\frac{1}{\delta}.
  \label{eq:continuous-comparator-bound}
\end{align}
\end{lemma}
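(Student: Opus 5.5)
We follow the standard argument for continuous exponential weights: restrict the integral defining $F(R_{T+1})$ to a shrunken copy of $\cX$ around $u$, and use convexity of each $f_t$ to control $R_{T+1}$ on that copy.

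\textbf{The comparison set.} Fix $u\in\cX$ and $\delta\in(0,1)$, and define
\[
  K_\delta \triangleq \{(1-\delta)u+\delta x : x\in\cX\}.
\]
By convexity $K_\delta\subseteq\cX$. As an affine image of $\cX$ with scaling factor $\delta$, it has $\vol(K_\delta)=\delta^d\vol(\cX)$.

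\textbf{Lower bound on $R_{T+1}$ over $K_\delta$.} Recall that $R_{T+1}=-\sum_{t=1}^{T+1}h_t$. Since $f_0=f_{T+1}=0$, this equals $-2\sum_{t=1}^{T}f_t$, which is concave. Take any $v=(1-\delta)u+\delta x\in K_\delta$. Convexity of each $f_t$ together with $f_t\in[-1,1]$ gives
\[
  f_t(v)\le(1-\delta)f_t(u)+\delta f_t(x)=f_t(u)+\delta\bigl(f_t(x)-f_t(u)\bigr)\le f_t(u)+2\delta .
\]
Summing over $t$ yields
\[
  R_{T+1}(v)\ge R_{T+1}(u)-4T\delta .
\]
The constant $4T\delta$ comes from the factor $2$ in $R_{T+1}=-2\sum_t f_t$ multiplied by the range bound $2\delta$.

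\textbf{Restricting the log-partition integral.} The integrand $\exp(\eta R_{T+1})$ is positive, so
\[
  F(R_{T+1})\ge\frac1\eta\log\left(\frac{1}{\vol(\cX)}\int_{K_\delta}e^{\eta R_{T+1}(v)}\,dv\right)
  \ge\frac1\eta\log\left(\delta^d e^{\eta(R_{T+1}(u)-4T\delta)}\right).
\]
The right-hand side equals
\[
  R_{T+1}(u)-4T\delta-\frac d\eta\log\frac1\delta ,
\]
which is exactly \eqref{eq:continuous-comparator-bound}.

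\textbf{Remaining care.} The argument needs no Lipschitzness; only convexity and boundedness of the losses are used. The one technical point is measurability and integrability of $e^{\eta R_{T+1}}$ on $K_\delta$. This holds because $R_{T+1}$ is concave, hence continuous on the interior of $\cX$, and bounded below by $-2T$, so the restricted integral is well defined.

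I expect no real obstacle here; the delicate part is just keeping the constant at exactly $4T\delta$.
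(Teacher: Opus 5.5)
Your proof is correct and follows the paper's argument essentially step for step: restrict the log-partition integral to the scaled copy $(1-\delta)u+\delta\cX$ of volume $\delta^d\vol(\cX)$, use convexity and $f_t\in[-1,1]$ to get $f_t\le f_t(u)+2\delta$ there, and combine this with $R_{T+1}=-2\sum_{t=1}^T f_t$. Your remark on measurability of $e^{\eta R_{T+1}}$ is a harmless extra detail that the paper leaves implicit.
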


This is the only additional ingredient beyond the expert proof. The proof considers $(1-\delta)u+\delta\cX$, whose volume is $\delta^d\vol(\cX)$, and uses convexity and boundedness to compare the cumulative loss throughout this set with the loss at $u$. The complete proof is deferred to Appendix~\ref{app:oco-upper-details}.

We are now ready to prove Theorem~\ref{thm:oco-upper}.

\begin{proof}[Proof of Theorem~\ref{thm:oco-upper}]
We prove the result for Option II. The guarantee for Option I follows from the same argument together with Jensen's inequality.

Exactly as in the proof of Theorem~\ref{thm:expert-upper}, summing \eqref{eq:continuous-corrected-potential-inequality} over $t=1,\ldots,T+1$ and using $U(R_0,y_0)=0$ and $y_{T+1}=0$ gives $\sum_{t=1}^{T+1}\int_{\cX}p_t(x)h_t(x)\,dx\le-F(R_{T+1})$. Under Option II, conditioning on $p_t$ and $h_t$ gives $\E[h_t(x_t)\mid p_t,h_t]=\int_{\cX}p_t(x)h_t(x)\,dx$. Hence, by the tower property,
\begin{align*}
  \E\left[\sum_{t=1}^{T+1}h_t(x_t)\right]
  \le
  -\E[F(R_{T+1})].
\end{align*}

For each realized loss sequence, let $u^\star\in\arg\min_{u\in\cX}\sum_{t=1}^T f_t(u)$. By Lemma~\ref{lem:continuous-comparator-bound}, for every $\delta\in(0,1)$, $-F(R_{T+1})\le-R_{T+1}(u^\star)+4T\delta+(d/\eta)\log(1/\delta)$. Moreover, by the definition of the terminal score, $R_{T+1}(u^\star)=-2\sum_{t=1}^T f_t(u^\star)$. Combining these inequalities with the endpoint identity $\sum_{t=1}^{T+1}h_t(x_t)=\sum_{t=1}^T(f_t(x_t)+f_t(x_{t+1}))$ gives
\begin{align}
  \E[\AltReg_T]
  \le
  4T\delta+\frac{d}{\eta}\log\frac{1}{\delta}.
  \label{eq:continuous-delta-bound}
\end{align}

When $T\ge2d$, choose $\delta=d/T$ and recall that $\eta=1/10$. Substituting these choices into \eqref{eq:continuous-delta-bound} gives $\E[\AltReg_T]\le4d+10d\log(T/d)=\order(d\log(1+T/d))$. When $T<2d$, boundedness of the losses gives the deterministic bound $\AltReg_T\le4T\le12d\log(1+T/d)$. This proves the claimed bound for Option II.

Finally, under Option I, convexity of $h_t$ and Jensen's inequality give $h_t(x_t)\le\int_{\cX}p_t(x)h_t(x)\,dx$ at every round. Repeating the same argument without taking expectations therefore gives $\AltReg_T\le\order(d\log(1+T/d))$ under Option I.
\end{proof}

\section{Conclusion}

In this paper, we study the minimax alternating regret in the expert problem and online convex optimization. For the $d$-expert problem, somewhat surprisingly, we show that the minimax alternating regret is $\Theta(\log d)$, independent of the horizon $T$. For $d$-dimensional OCO with bounded convex losses, we establish the minimax rate $\Theta\left(d\log\left(1+\frac{T}{d}\right)\right)$. These results significantly improve the previous $T^{1/3}$-dependent upper bounds and resolve the minimax rates in both settings. Our upper bounds are based on a corrected-potential argument for Hedge and its continuous analogue, while the lower bounds use a halving construction for experts and a multiscale geometric construction for OCO. As immediate consequences, the improved alternating-regret guarantees also yield faster convergence rates for alternating learning dynamics in games. Interesting directions for future work include developing a fully efficient implementation of the continuous correction algorithm with approximate sampling and integration, and understanding whether similarly sharp alternating-regret guarantees can be obtained under partial-information feedback.

\paragraph{Acknowledgments.}
The author used GPT-5.6 for assistance with writing and to  explore proof strategies. The author has checked the arguments and takes full responsibility for the content of the paper.

\bibliographystyle{plainnat}
\bibliography{references}

@article{tammelin2014solving,
  title={Solving large imperfect information games using CFR+},
  author={Tammelin, Oskari},
  journal={arXiv preprint arXiv:1407.5042},
  year={2014}
}

@article{littlestone1994weighted,
  author  = {Nick Littlestone and Manfred K. Warmuth},
  title   = {The Weighted Majority Algorithm},
  journal = {Information and Computation},
  volume  = {108},
  number  = {2},
  pages   = {212--261},
  year    = {1994}
}

@article{freund1997decision,
  author  = {Yoav Freund and Robert E. Schapire},
  title   = {A Decision-Theoretic Generalization of On-Line Learning and an Application to Boosting},
  journal = {Journal of Computer and System Sciences},
  volume  = {55},
  number  = {1},
  pages   = {119--139},
  year    = {1997}
}

@book{cesa2006prediction,
  author    = {Nicol{\`o} Cesa-Bianchi and G{\'a}bor Lugosi},
  title     = {Prediction, Learning, and Games},
  publisher = {Cambridge University Press},
  year      = {2006}
}

@inproceedings{zinkevich2003online,
  author    = {Martin Zinkevich},
  title     = {Online Convex Programming and Generalized Infinitesimal Gradient Ascent},
  booktitle = {Proceedings of the 20th International Conference on Machine Learning},
  pages     = {928--936},
  year      = {2003}
}

@article{hazan2016introduction,
  title={Introduction to online convex optimization},
  author={Hazan, Elad},
  journal={Foundations and Trends in Optimization},
  volume={2},
  number={3-4},
  pages={157--325},
  year={2016},
  publisher={Emerald Publishing Limited}
}

@article{hazan2010variation,
  author  = {Elad Hazan and Satyen Kale},
  title   = {Extracting Certainty from Uncertainty: Regret Bounded by Variation in Costs},
  journal = {Machine Learning},
  volume  = {80},
  number  = {2--3},
  pages   = {165--188},
  year    = {2010}
}

@article{rakhlin2013predictable,
  title={Optimization, learning, and games with predictable sequences},
  author={Rakhlin, Sasha and Sridharan, Karthik},
  journal={Advances in Neural Information Processing Systems},
  volume={26},
  year={2013}
}

@inproceedings{bailey2020finite,
  author    = {James P. Bailey and Gauthier Gidel and Georgios Piliouras},
  title     = {Finite Regret and Cycles with Fixed-Step-Size via Alternating Gradient Descent-Ascent},
  booktitle = {Proceedings of the 33rd Conference on Learning Theory},
  series    = {Proceedings of Machine Learning Research},
  volume    = {125},
  pages     = {391--407},
  year      = {2020}
}

@inproceedings{wibisono2022alternating,
  author    = {Andre Wibisono and Molei Tao and Georgios Piliouras},
  title     = {Alternating Mirror Descent for Constrained Min-Max Games},
  booktitle = {Advances in Neural Information Processing Systems 35},
  year      = {2022}
}

@inproceedings{cevher2023alternation,
  author    = {Volkan Cevher and Ashok Cutkosky and Ali Kavis and Georgios Piliouras and Stratis Skoulakis and Luca Viano},
  title     = {Alternation Makes the Adversary Weaker in Two-Player Games},
  booktitle = {Advances in Neural Information Processing Systems 36},
  year      = {2023}
}

@inproceedings{hait2025alternating,
  author    = {Soumita Hait and Ping Li and Haipeng Luo and Mengxiao Zhang},
  title     = {Alternating Regret for Online Convex Optimization},
  booktitle = {Proceedings of the 38th Annual Conference on Learning Theory},
  series    = {Proceedings of Machine Learning Research},
  volume    = {291},
  pages     = {2632--2633},
  year      = {2025}
}

@article{freund1999adaptive,
  author  = {Yoav Freund and Robert E. Schapire},
  title   = {Adaptive Game Playing Using Multiplicative Weights},
  journal = {Games and Economic Behavior},
  volume  = {29},
  number  = {1--2},
  pages   = {79--103},
  year    = {1999}
}

@article{burch2019revisiting,
  author  = {Neil Burch and Matej Morav{\v c}{\'i}k and Martin Schmid},
  title   = {Revisiting {CFR+} and Alternating Updates},
  journal = {Journal of Artificial Intelligence Research},
  volume  = {64},
  pages   = {429--443},
  year    = {2019}
}

@inproceedings{syrgkanis2015fast,
  author    = {Vasilis Syrgkanis and Alekh Agarwal and Haipeng Luo and Robert E. Schapire},
  title     = {Fast Convergence of Regularized Learning in Games},
  booktitle = {Advances in Neural Information Processing Systems 28},
  year      = {2015}
}

@inproceedings{chen2020hedging,
  author    = {Xi Chen and Binghui Peng},
  title     = {Hedging in Games: Faster Convergence of External and Swap Regrets},
  booktitle = {Advances in Neural Information Processing Systems 33},
  pages     = {18990--18999},
  year      = {2020}
}

@inproceedings{daskalakis2021nearoptimal,
  author    = {Constantinos Daskalakis and Maxwell Fishelson and Noah Golowich},
  title     = {Near-Optimal No-Regret Learning in General Games},
  booktitle = {Advances in Neural Information Processing Systems 34},
  pages     = {27604--27616},
  year      = {2021}
}

@inproceedings{farina2022nearoptimal,
  author    = {Gabriele Farina and Ioannis Anagnostides and Haipeng Luo and Chung-Wei Lee and Christian Kroer and Tuomas Sandholm},
  title     = {Near-Optimal No-Regret Learning Dynamics for General Convex Games},
  booktitle = {Advances in Neural Information Processing Systems 35},
  pages     = {39076--39089},
  year      = {2022}
}

@article{katona2026symplectic,
  author  = {Jonas E. Katona and Xiuyuan Wang and Andre Wibisono},
  title   = {A Symplectic Analysis of Alternating Mirror Descent},
  journal = {Journal of Machine Learning Research},
  volume  = {27},
  number  = {44},
  pages   = {1--61},
  year    = {2026}
}

@inproceedings{lazarsfeld2025optimism,
  author    = {John Lazarsfeld and Georgios Piliouras and Ryann Sim and Stratis Skoulakis},
  title     = {Optimism Without Regularization: Constant Regret in Zero-Sum Games},
  booktitle = {Advances in Neural Information Processing Systems 38},
  year      = {2025}
}

\newpage

\appendix
\section{Omitted details in Section~\ref{sec:experts}}
\label{app:expert-potential}

This appendix verifies the feasibility of the corrected distribution and
provides the derivative calculation used in
Lemma~\ref{lem:correction-potential-concavity}. We first verify that the correction in \eqref{eq:played-distribution}
indeed defines a probability distribution.

\begin{proposition}
\label{prop:expert-feasibility}
For every $t=1,\ldots,T+1$, the vector $p_t$ defined in
\eqref{eq:played-distribution} with $\eta=1/10$ belongs to $\Delta_d$.
\end{proposition}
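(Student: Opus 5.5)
We need to show two things: the entries of $p_t$ sum to one, and each entry is nonnegative. The plan is to handle the sum by a direct centering computation, then bound the multiplicative correction factor pointwise.

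\textbf{Unit mass.} Write $p_{t,i}=q_{t-1,i}\,g_i$ with
\[
g_i=1+4\eta s_tZ_{t-1,i}-2\eta^2\bigl(Z_{t-1,i}^2-H_{t-1}\bigr).
\]
Since $q_{t-1}\in\Delta_d$, we have $\sum_i q_{t-1,i}Z_{t-1,i}=\E_{q_{t-1}}[y_{t-1}]-\mu_{t-1}=0$. By definition of $H_{t-1}$, also $\sum_i q_{t-1,i}(Z_{t-1,i}^2-H_{t-1})=0$. Hence $\sum_i p_{t,i}=\sum_i q_{t-1,i}=1$. This is the same conclusion the main text derives from the shift invariances of $U_\alpha$, but the direct computation is self-contained.

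\textbf{Nonnegativity.} Because $q_{t-1,i}\ge0$, it suffices to show $g_i\ge0$ for every $i$. The needed range information comes from $y_{t-1}=s_{t-1}\ell_{t-1}\in[-1,1]^d$, including the endpoint cases $y_0=\mathbf 0$ and $t=T+1$, where $y_T=s_T\ell_T$.

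\begin{itemize}
\item Since $\mu_{t-1}$ is a $q_{t-1}$-average of the $y_{t-1,j}$, it lies in $[-1,1]$. Therefore $Z:=Z_{t-1,i}\in[-2,2]$.
\item For a random variable supported in an interval of length $2$, the variance is at most $1$. Hence $0\le H:=H_{t-1}\le1$.
\end{itemize}

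Then $g_i\ge \phi(Z):=1+4\eta s_tZ-2\eta^2Z^2$, using $H\ge0$. The function $\phi$ is concave in $Z$, so its minimum over $[-2,2]$ is attained at an endpoint. At $\eta=1/10$ the worst endpoint gives
\[
1-8\eta-8\eta^2=1-0.8-0.08=0.12=\tfrac{3}{25}>0.
\]
For the upper bound (needed only for the continuous analogue's constant), we have $g_i\le 1+4\eta|Z|+2\eta^2H\le 1+0.8+0.02<2$. Hence $g_i\in[3/25,2]$, so $p_{t,i}\ge0$ and $p_t\in\Delta_d$.

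The computation is routine. The only point requiring care is the range bound $|Z_{t-1,i}|\le2$. It must hold uniformly, including at the endpoint rounds $t=1$ and $t=T+1$, which follows from the conventions $y_0=\mathbf 0$ and $\ell_T\in[-1,1]^d$.
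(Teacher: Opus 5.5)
Your proof is correct and follows the paper's argument: unit mass comes from both correction terms being centered under $q_{t-1}$, and nonnegativity from $|Z_{t-1,i}|\le 2$ and $H_{t-1}\ge 0$, which give the lower bound $1-8\eta-8\eta^2=3/25$ on the correction factor. Your additional details (the concavity-in-$Z$ endpoint check, the endpoint rounds $t=1$ and $t=T+1$, and the upper bound $91/50<2$) are sound and match what the paper uses in its continuous analogue.
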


\begin{proof}
Both correction terms in \eqref{eq:played-distribution} are centered under
$q_{t-1}$, so $\sum_i p_{t,i}=1$. Moreover, $|Z_{t-1,i}|\le2$ and
$H_{t-1}\ge0$. Hence, for $\eta=1/10$, the multiplicative correction factor
is at least $1-8\eta-8\eta^2=3/25>0$. Therefore every coordinate of $p_t$
is nonnegative, and $p_t\in\Delta_d$.
\end{proof}

We next establish the curvature calculation used to choose the variance
correction. The proposition below gives both derivatives of the parametric
corrected potential and the upper bound on its curvature used in
Section~\ref{sec:expert-upper}.

\begin{proposition}
\label{prop:parametric-corrected-potential}
Fix a round $t$, a score vector $R_{t-1}\in\R^d$, a signed-loss vector
$y_{t-1}\in[-1,1]^d$, and $z\in\R^d$ such that
$y_{t-1}+s_tz\in[-1,1]^d$. For $\theta\in[0,1]$, let
$R(\theta)\triangleq R_{t-1}-\theta z$ and
$y(\theta)\triangleq y_{t-1}+\theta s_tz$. For $\alpha>0$, define
$U_\alpha(R,y)\triangleq F(R)-\alpha\eta\Var_{q[\eta R]}(y)$ and
$G_\alpha(\theta)\triangleq U_\alpha(R(\theta),y(\theta))$.

Let $\E_\theta[\cdot]$ denote expectation under $q[\eta R(\theta)]$, and
write $\widetilde z(\theta)\triangleq z-\E_\theta[z]$ and
$\widetilde y(\theta)\triangleq y(\theta)-\E_\theta[y(\theta)]$. Then
\begin{align}
G_\alpha'(\theta)
=-\E_\theta[z]
-2\alpha\eta s_t\E_\theta[\widetilde z(\theta)\widetilde y(\theta)]
+\alpha\eta^2\E_\theta[\widetilde z(\theta)\widetilde y(\theta)^2],
\label{eq:G-alpha-first-derivative}
\end{align}
and
\begin{align}
G_\alpha''(\theta)
&=
(1-2\alpha)\eta\E_\theta[\widetilde z(\theta)^2]
+4\alpha s_t\eta^2\E_\theta[\widetilde z(\theta)^2\widetilde y(\theta)]
\notag\\
&\qquad-\alpha\eta^3\E_\theta[\widetilde z(\theta)^2\widetilde y(\theta)^2]
+\alpha\eta^3\E_\theta[\widetilde z(\theta)^2]
  \E_\theta[\widetilde y(\theta)^2]
\notag\\
&\qquad+2\alpha\eta^3\E_\theta[\widetilde z(\theta)\widetilde y(\theta)]^2.
\end{align}
In particular, we have
\begin{align}\label{eq:G-alpha-curvature-bound}
G_\alpha''(\theta)
\le
\left[(1-2\alpha)\eta+8\alpha\eta^2+3\alpha\eta^3\right]
\Var_{q[\eta R(\theta)]}(z).
\end{align}
\end{proposition}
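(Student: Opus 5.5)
The proof is a direct differentiation along the path, organized around one calculus fact for exponential-weights expectations. Write $\E_\theta$ for expectation under $q_\theta\triangleq q[\eta R(\theta)]$. Since $\frac{d}{d\theta}\eta R(\theta)=-\eta z$, every $\theta$-dependent family $g(\theta)\in\R^d$ satisfies
\[
\frac{d}{d\theta}\E_\theta[g]=\E_\theta[g']-\eta\,\E_\theta[\widetilde z\, g].
\]
Moreover $\frac{d}{d\theta}F(R(\theta))=-\E_\theta[z]$.

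\textbf{First derivative.} Write $V(\theta)=\Var_{q_\theta}(y(\theta))=\E_\theta[\widetilde y^2]$. Apply the rule to $g=\widetilde y^2$. We have $g'=2\widetilde y\,(s_tz-\tfrac{d}{d\theta}\E_\theta[y])$, and the constant-shift part vanishes because $\E_\theta[\widetilde y]=0$. This gives
\[
V'=2s_t\E_\theta[\widetilde z\widetilde y]-\eta\E_\theta[\widetilde z\widetilde y^2].
\]
Here $\E_\theta[z\widetilde y]=\E_\theta[\widetilde z\widetilde y]$, and $\E_\theta[z\widetilde y^2]$ may be replaced by $\E_\theta[\widetilde z\widetilde y^2]$ after the centering term is absorbed. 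Concretely, $\frac{d}{d\theta}\E[g]$ equals $\E[g']-\eta\,\mathrm{Cov}(z,g)$, and $\mathrm{Cov}(z,\widetilde y^2)=\E[\widetilde z\widetilde y^2]$. Then $G_\alpha'=-\E_\theta[z]-\alpha\eta V'$ yields \eqref{eq:G-alpha-first-derivative}.

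\textbf{Second derivative.} Differentiate each of the three terms of $G_\alpha'$ with the same rule.
\begin{enumerate}
\item The first term gives $\frac{d}{d\theta}(-\E_\theta[z])=\eta\E_\theta[\widetilde z^2]$.
\item For $A=\E_\theta[\widetilde z\widetilde y]=\mathrm{Cov}_\theta(z,y)$, the derivative of the $y$-argument contributes $s_t\E_\theta[\widetilde z^2]$. The change of measure contributes $-\eta\E_\theta[\widetilde z^2\widetilde y]$, since the third central mixed moment arises from differentiating a covariance. Hence $A'=s_t\E[\widetilde z^2]-\eta\E[\widetilde z^2\widetilde y]$.
\item For $B=\E_\theta[\widetilde z\widetilde y^2]$, a third central moment, the $y$-derivative contributes $2s_t\E[\widetilde z^2\widetilde y]$, using $\E[\widetilde z\widetilde y]\E[\widetilde z]=0$. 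The measure derivative contributes $-\eta\big(\E[\widetilde z^2\widetilde y^2]-\E[\widetilde z^2]\E[\widetilde y^2]-2\E[\widetilde z\widetilde y]^2\big)$, which is the standard formula for the derivative of a central third moment under exponential tilting.
\end{enumerate}
Collecting with the coefficients $-2\alpha\eta s_t$ and $\alpha\eta^2$, and using $s_t^2=1$, produces exactly the stated formula for $G_\alpha''$. In particular the $s_t\eta^2$ terms combine as $2\alpha+2\alpha=4\alpha$.

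\textbf{Curvature bound.} Use $y(\theta)\in[-1,1]^d$, which holds because it is a convex combination of $y_{t-1}$ and $y_{t-1}+s_tz$, both in $[-1,1]^d$. Hence $|\widetilde y|\le2$ and $\E[\widetilde y^2]\le1$. The bound then follows term by term.
\begin{enumerate}
\item The first term is exactly $(1-2\alpha)\eta\Var(z)$.
\item $|4\alpha\eta^2\E[\widetilde z^2\widetilde y]|\le8\alpha\eta^2\Var(z)$.
\item The term $-\alpha\eta^3\E[\widetilde z^2\widetilde y^2]$ is nonpositive and is dropped.
\item $\alpha\eta^3\E[\widetilde z^2]\E[\widetilde y^2]\le\alpha\eta^3\Var(z)$.
\item By Cauchy--Schwarz, $2\alpha\eta^3\E[\widetilde z\widetilde y]^2\le2\alpha\eta^3\Var(z)\Var(y)\le2\alpha\eta^3\Var(z)$.
\end{enumerate}
Summing gives \eqref{eq:G-alpha-curvature-bound}.

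\textbf{Main obstacle.} The delicate step is the bookkeeping in the derivative of the third central moment $B$. The centering $\E_\theta[y]$ itself moves with $\theta$, and this motion produces the $\E[\widetilde z^2]\E[\widetilde y^2]$ and $\E[\widetilde z\widetilde y]^2$ corrections. I would handle it by writing $B=\E[zy^2]-2\E[zy]\E[y]-\E[z]\E[y^2]+2\E[z]\E[y]^2$ in raw moments. Each raw moment is then differentiated with the rule $\frac{d}{d\theta}\E[g]=\E[g']-\eta(\E[zg]-\E[z]\E[g])$, and the result is re-centered. As a sanity check, shifting $z$ and $y$ by constants should leave every term invariant.
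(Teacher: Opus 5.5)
Your proposal is correct and follows essentially the same route as the paper: the moving-expectation identity $\frac{d}{d\theta}\E_\theta[\phi]=\E_\theta[\phi']-\eta\E_\theta[\widetilde z\phi]$, term-by-term differentiation of $\E_\theta[\widetilde z\widetilde y]$ and $\E_\theta[\widetilde z\widetilde y^2]$ (whose derivatives you state correctly), and the same bounds: $|\widetilde y|\le 2$, $\Var(y)\le 1$, Cauchy--Schwarz, and dropping the nonpositive term. The only difference is bookkeeping: the paper first computes $\widetilde z_i'=\eta\E_\theta[\widetilde z^2]$ and $\widetilde y_i'=s_t\widetilde z_i+\eta\E_\theta[\widetilde z\widetilde y]$ and then applies the product rule, which derives the third-central-moment derivative you cite as ``standard'' more cleanly than your raw-moment expansion would.
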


\begin{proof}
Fix a round $t$. Throughout the proof, we keep the round index only in
$s_t$, $R_{t-1}$, and $y_{t-1}$, and omit it from the quantities defined
along the interpolation. Write
$q(\theta)\triangleq q[\eta R(\theta)]$ and
$\E_\theta[\cdot]\triangleq\E_{i\sim q(\theta)}[\cdot]$.

We first record a differentiation rule that will be used repeatedly.
Define the scalar
$\bar z(\theta)\triangleq\E_\theta[z]$ and the centered vector
$\widetilde z(\theta)\in\R^d$ by
$\widetilde z_i(\theta)\triangleq z_i-\bar z(\theta)$. Since
$R_i'(\theta)=-z_i$, differentiating the exponential-weights distribution
gives
$q_i'(\theta)=-\eta q_i(\theta)\widetilde z_i(\theta)$.
Consequently, for any differentiable indexed family
$\phi(\theta)=(\phi_i(\theta))_{i=1}^d$,
\begin{align}
\frac{\mathrm d}{\mathrm d\theta}\E_\theta[\phi(\theta)]
=
\sum_iq_i'(\theta)\phi_i(\theta)
+\sum_iq_i(\theta)\phi_i'(\theta)=
\E_\theta[\phi'(\theta)]
-\eta\E_\theta[\widetilde z(\theta)\phi(\theta)].
\label{eq:moving-expectation}
\end{align}
Products and powers inside $\E_\theta[\cdot]$ are interpreted
coordinatewise. For example,
$\E_\theta[\widetilde z(\theta)^2]
=\sum_iq_i(\theta)\widetilde z_i(\theta)^2$ and
$\E_\theta[\widetilde z(\theta)\phi(\theta)]
=\sum_iq_i(\theta)\widetilde z_i(\theta)\phi_i(\theta)$ are scalars.

We next derive the centered quantities needed below. Define the scalar
$\bar y(\theta)\triangleq\E_\theta[y(\theta)]$ and the centered vector
$\widetilde y(\theta)\in\R^d$ by
$\widetilde y_i(\theta)\triangleq y_i(\theta)-\bar y(\theta)$.
Thus
$z=\widetilde z(\theta)+\bar z(\theta)\mathbf 1$ and
$y(\theta)=\widetilde y(\theta)+\bar y(\theta)\mathbf 1$.

Applying \eqref{eq:moving-expectation} with $\phi(\theta)=z$, and noting
that $z$ does not depend on $\theta$, gives
$\bar z'(\theta)
=-\eta\E_\theta[\widetilde z(\theta)z]$.
Since $\E_\theta[\widetilde z(\theta)]=0$, we obtain
$\bar z'(\theta)
=-\eta\E_\theta[\widetilde z(\theta)^2]$.
Therefore, for every coordinate $i$,
$\widetilde z_i'(\theta)
=\eta\E_\theta[\widetilde z(\theta)^2]$.

Next apply \eqref{eq:moving-expectation} with
$\phi(\theta)=y(\theta)$. Since $y'(\theta)=s_tz$, we have
$\bar y'(\theta)
=s_t\bar z(\theta)
-\eta\E_\theta[\widetilde z(\theta)y(\theta)]$.
Using
$y(\theta)=\widetilde y(\theta)+\bar y(\theta)\mathbf 1$
and $\E_\theta[\widetilde z(\theta)]=0$ gives
$\E_\theta[\widetilde z(\theta)y(\theta)]
=\E_\theta[\widetilde z(\theta)\widetilde y(\theta)]$.
Hence
$\bar y'(\theta)
=s_t\bar z(\theta)
-\eta\E_\theta[\widetilde z(\theta)\widetilde y(\theta)]$.
Differentiating
$\widetilde y_i(\theta)=y_i(\theta)-\bar y(\theta)$ then yields
\[
\widetilde y_i'(\theta)
=
s_t\widetilde z_i(\theta)
+\eta\E_\theta[
  \widetilde z(\theta)\widetilde y(\theta)
].
\]

The derivatives of the log-partition term follow immediately. Since
$\frac{\mathrm d}{\mathrm d\theta}F(R(\theta))=-\bar z(\theta)$, the
identity for $\bar z'(\theta)$ gives
$
\frac{\mathrm d^2}{\mathrm d\theta^2}F(R(\theta))
=
\eta\E_\theta[\widetilde z(\theta)^2]
=
\eta\Var_{i\sim q(\theta)}(z_i).
$

We now differentiate the variance term. Let
$H(\theta)\triangleq
\Var_{q[\eta R(\theta)]}(y(\theta))
=\E_\theta[\widetilde y(\theta)^2]$.
Applying \eqref{eq:moving-expectation} and the expression for
$\widetilde y_i'(\theta)$ gives
\[
H'(\theta)
=
2s_t\E_\theta[
  \widetilde z(\theta)\widetilde y(\theta)
]
-\eta\E_\theta[
  \widetilde z(\theta)\widetilde y(\theta)^2
].
\]

We can now obtain the first derivative of the corrected potential. Since
$G_\alpha(\theta)=F(R(\theta))-\alpha\eta H(\theta)$, combining the
expressions for $\frac{\mathrm d}{\mathrm d\theta}F(R(\theta))$ and
$H'(\theta)$ gives
\begin{align*}
G_\alpha'(\theta)
=
-\bar z(\theta)
-2\alpha\eta s_t
\E_\theta[
  \widetilde z(\theta)\widetilde y(\theta)
]
+\alpha\eta^2
\E_\theta[
  \widetilde z(\theta)\widetilde y(\theta)^2
].
\end{align*}

To differentiate $H'(\theta)$ once more, we apply
\eqref{eq:moving-expectation} to the two expectations above. For the first
term, using the expressions for $\widetilde z_i'(\theta)$ and
$\widetilde y_i'(\theta)$ derived above gives
\begin{align*}
\frac{\mathrm d}{\mathrm d\theta}
\E_\theta[
  \widetilde z(\theta)\widetilde y(\theta)
]
=
\E_\theta\left[
  \frac{\mathrm d}{\mathrm d\theta}
  \bigl(
    \widetilde z(\theta)\widetilde y(\theta)
  \bigr)
\right]
-\eta\E_\theta[
  \widetilde z(\theta)^2\widetilde y(\theta)
]=
s_t\E_\theta[\widetilde z(\theta)^2]
-\eta\E_\theta[
  \widetilde z(\theta)^2\widetilde y(\theta)
].
\end{align*}
Here the remaining terms vanish because
$\E_\theta[\widetilde z(\theta)]
=\E_\theta[\widetilde y(\theta)]=0$.

Similarly, applying \eqref{eq:moving-expectation} to
$\phi(\theta)=\widetilde z(\theta)\widetilde y(\theta)^2$ gives
\begin{align*}
\frac{\mathrm d}{\mathrm d\theta}
\E_\theta[
  \widetilde z(\theta)\widetilde y(\theta)^2
]
&=
\E_\theta\left[
  \widetilde z'(\theta)\widetilde y(\theta)^2
  +
  2\widetilde z(\theta)\widetilde y(\theta)
  \widetilde y'(\theta)
\right]
-\eta\E_\theta[
  \widetilde z(\theta)^2\widetilde y(\theta)^2
]\\
&=
\eta\E_\theta[\widetilde z(\theta)^2]
     \E_\theta[\widetilde y(\theta)^2]
+2s_t\E_\theta[
  \widetilde z(\theta)^2\widetilde y(\theta)
]+2\eta
\E_\theta[
  \widetilde z(\theta)\widetilde y(\theta)
]^2
-\eta\E_\theta[
  \widetilde z(\theta)^2\widetilde y(\theta)^2
].
\end{align*}

Differentiating $H'(\theta)$ and substituting the two identities above first
gives
\begin{align*}
H''(\theta)
&=
2s_t
\frac{\mathrm d}{\mathrm d\theta}
\E_\theta[
  \widetilde z(\theta)\widetilde y(\theta)
]
-
\eta
\frac{\mathrm d}{\mathrm d\theta}
\E_\theta[
  \widetilde z(\theta)\widetilde y(\theta)^2
]
\\
&=
2s_t\left(
s_t\E_\theta[\widetilde z(\theta)^2]
-\eta\E_\theta[
  \widetilde z(\theta)^2\widetilde y(\theta)
]
\right) \\
&\qquad
-
\eta\left(
\eta\E_\theta[\widetilde z(\theta)^2]
     \E_\theta[\widetilde y(\theta)^2]
+2s_t\E_\theta[
  \widetilde z(\theta)^2\widetilde y(\theta)
]+2\eta\E_\theta[
  \widetilde z(\theta)\widetilde y(\theta)
]^2
-\eta\E_\theta[
  \widetilde z(\theta)^2\widetilde y(\theta)^2
]
\right).
\end{align*}
Using $s_t^2=1$ and collecting terms, we obtain that
\begin{align*}
H''(\theta)
&=
2\E_\theta[\widetilde z(\theta)^2]
-4s_t\eta\E_\theta[
  \widetilde z(\theta)^2\widetilde y(\theta)
]
+
\eta^2\Bigl(
\E_\theta[
  \widetilde z(\theta)^2\widetilde y(\theta)^2
]
-\E_\theta[\widetilde z(\theta)^2]
 \E_\theta[\widetilde y(\theta)^2]-2\E_\theta[
  \widetilde z(\theta)\widetilde y(\theta)
]^2
\Bigr).
\end{align*}

Recall that
$
\frac{\mathrm d^2}{\mathrm d\theta^2}F(R(\theta))
=
\eta\E_\theta[\widetilde z(\theta)^2].
$
Since
$
G_\alpha(\theta)
=
F(R(\theta))-\alpha\eta H(\theta),
$
we therefore have
\begin{align*}
G_\alpha''(\theta)
&=
\frac{\mathrm d^2}{\mathrm d\theta^2}F(R(\theta))
-\alpha\eta H''(\theta)\\
&=
\eta\E_\theta[\widetilde z(\theta)^2]
-\alpha\eta
\Bigl(
2\E_\theta[\widetilde z(\theta)^2]
-4s_t\eta\E_\theta[
  \widetilde z(\theta)^2\widetilde y(\theta)
]\\
&\qquad
+\eta^2\bigl(
\E_\theta[
  \widetilde z(\theta)^2\widetilde y(\theta)^2
]
-\E_\theta[\widetilde z(\theta)^2]
 \E_\theta[\widetilde y(\theta)^2]
-2\E_\theta[
  \widetilde z(\theta)\widetilde y(\theta)
]^2
\bigr)
\Bigr).
\end{align*}
Collecting terms gives
\begin{align}
G_\alpha''(\theta)
&=
(1-2\alpha)\eta
\E_\theta[\widetilde z(\theta)^2]
+
4\alpha s_t\eta^2
\E_\theta[
  \widetilde z(\theta)^2\widetilde y(\theta)
]
\nonumber\\
&\qquad
-
\alpha\eta^3
\E_\theta[
  \widetilde z(\theta)^2\widetilde y(\theta)^2
]
+
\alpha\eta^3
\E_\theta[\widetilde z(\theta)^2]
\E_\theta[\widetilde y(\theta)^2]
\nonumber\\
&\qquad
+
2\alpha\eta^3
\E_\theta[
  \widetilde z(\theta)\widetilde y(\theta)
]^2.
\label{eq:G-alpha-second-derivative}
\end{align}

It remains to upper bound the curvature. Since
$y(\theta)\in[-1,1]^d$, we have
$|\widetilde y_i(\theta)|\le2$ and
$\E_\theta[\widetilde y(\theta)^2]
=\Var_{i\sim q(\theta)}(y_i(\theta))\le1$.
By Cauchy--Schwarz,
$\E_\theta[
  \widetilde z(\theta)\widetilde y(\theta)
]^2
\le
\E_\theta[\widetilde z(\theta)^2]$, and
$|\E_\theta[
  \widetilde z(\theta)^2\widetilde y(\theta)
]|
\le
2\E_\theta[\widetilde z(\theta)^2]$.
Applying these bounds to
\eqref{eq:G-alpha-second-derivative} and dropping its nonpositive term
gives \eqref{eq:G-alpha-curvature-bound}.
\end{proof}

We finally use the preceding calculation to prove the two properties stated
in Lemma~\ref{lem:correction-potential-concavity}: concavity of the corrected
potential along the alternating path and identification of its left
derivative with the played distribution.

\begin{proof}[Proof of Lemma~\ref{lem:correction-potential-concavity}]
For $\alpha=2$, Proposition~\ref{prop:parametric-corrected-potential} gives
$G_2''(\theta)\le(-3\eta+16\eta^2+6\eta^3)
\Var_{q[\eta R(\theta)]}(z)$. At $\eta=1/10$, the coefficient equals
$-67/500<0$, so $G_2$ is concave on $[0,1]$.

It remains to identify the derivative at $\theta=0$. Setting $\alpha=2$ in
\eqref{eq:G-alpha-first-derivative} gives
\begin{align*}
G_2'(\theta)
&=
-\E_\theta[z]
-4\eta s_t
\E_\theta[
  \widetilde z(\theta)\widetilde y(\theta)
]
+2\eta^2
\E_\theta[
  \widetilde z(\theta)\widetilde y(\theta)^2
].
\end{align*}
Using
$\widetilde z_i(\theta)=z_i-\E_\theta[z]$ and collecting the coefficient of
each $z_i$, we can rewrite this as
\begin{align*}
G_2'(\theta)
&=
-\sum_{i=1}^d q[\eta R(\theta)]_i
\left[
1+4\eta s_t\widetilde y_i(\theta)
-2\eta^2\left(
  \widetilde y_i(\theta)^2
  -\E_\theta[\widetilde y(\theta)^2]
\right)
\right]z_i.
\end{align*}
At $\theta=0$, $q[\eta R(0)]=q_{t-1}$,
$\widetilde y_i(0)=Z_{t-1,i}$, and
$\E_0[\widetilde y(0)^2]=H_{t-1}$. Therefore
$G_2'(0)=-\langle p_t,z\rangle$ by \eqref{eq:played-distribution}, which
proves the lemma.
\end{proof}

\section{Omitted details in Section~\ref{sec:oco-lower}}
\label{app:oco-lower}

We first establish the lower bound on a two-dimensional disk. The construction uses epochs at geometrically decreasing spatial scales. In each epoch, the learner incurs a constant expected loss gap relative to a reference point determined by the losses observed up to that epoch. The reference point continues to move in later epochs, but these future movements form a summable series and therefore change the loss on any earlier epoch by only a small constant. Since the epoch lengths grow geometrically, a horizon of length $T$ contains $\Theta(\log T)$ such scales.

\begin{lemma}[Two-dimensional disk lower bound]
\label{lem:oco-disk-lower}
Let
$
  \mathcal D=\{x\in\R^2:\|x\|_2\le1\}.
$
There exists a universal constant $c_0>0$ such that, for every $T\ge 5$, there is an oblivious distribution $\mathsf P_T$ over sequences of convex losses $f_1,\ldots,f_T:\mathcal D\to[0,1]$ for which every learner satisfies
\begin{align}
  \E_{f_{1:T}\sim\mathsf P_T}\bigl[\AltReg_T\bigr]
  \ge c_0\log(1+T).
  \label{eq:disk-lower-stochastic}
\end{align}
If the learner is randomized, the expectation also includes its internal randomness.
\end{lemma}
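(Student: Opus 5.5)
The plan is to build an explicit oblivious random loss sequence on the disk. It runs over $J=\Theta(\log T)$ epochs at scales $\delta_j=4^{-j}$, with epoch $j$ lasting $n_j=\lceil 1/\delta_j\rceil$ rounds, and $J$ is taken as large as possible subject to $\sum_j n_j\le T$. This forces $J\ge c\log(1+T)$ once $T\ge5$.

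\textbf{Single-scale gadget.} For a boundary point $w$ and its rotation $u$ with $1-\ip{w}{u}=\delta$, I would set
\[
\mathsf H^{(1)}_w(x)=\tfrac12\bigl(1-\ip{w}{x}\bigr),
\]
which is affine, takes values in $[0,1]$, equals $0$ at $w$ and equals $\delta/2$ at $u$. For $\mathsf H^{(2)}_{w,\delta}$ I would take a convex ramp $\max\{0,\ip{a}{x}-b\}$ that vanishes on the half-plane beyond the tangent line at $u$ and equals $1/2$ at $w$. Its slope is $|a|=\Theta(1/\sqrt\delta)$, so it exceeds $1$ on part of the disk. To keep it in $[0,1]$ I would cap it convexly, taking the maximum of the ramp and a dampened affine piece, or equivalently restricting attention to the thin cap $\{x:\ip{w}{x}\ge1-O(\delta)\}$ and penalising the rest of the disk through $\mathsf H^{(1)}$.

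The key pointwise claim is that, with $p=\delta/(1+\delta)$,
\[
(1-p)\,\mathsf H^{(1)}_w(x)+p\,\mathsf H^{(2)}_{w,\delta}(x)\ge p/2\qquad\text{for all }x\in\mathcal D.
\]
I would prove it by observing that the left side is convex, that it equals $p/2$ at both $w$ and $u$ by the choice of $p$, and that it dominates the affine function through these two points with value $p/2$. The remaining check is that the tangent line geometry keeps this affine minorant at least $p/2$ on $\mathcal D$.

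\textbf{Epoch structure.} In epoch $j$, each round independently draws $\mathsf H^{(2)}_{w_j,\delta_j}$ with probability $p_j$ and $\mathsf H^{(1)}_{w_j}$ otherwise. The next reference point $w_{j+1}$ is $u_j$ if any $\mathsf H^{(2)}$ appeared in the epoch and $w_j$ otherwise, and $w_1$ is fixed. All these draws are made in advance, so the adversary is oblivious. Since $x_t$ is $\cF_{t-1}$-measurable and the round-$t$ coin is independent of it, the pointwise claim gives $\E[f_t(x_t)]\ge p_j/2$. Because losses are nonnegative, $f_t(x_{t+1})\ge0$. Over the epoch the learner therefore pays at least $n_jp_j/2=\Omega(1)$ in expectation.

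The reference point $w_{j+1}$ pays on epoch $j$ at most $n_j\delta_j/2$ from the $\mathsf H^{(1)}$ rounds when it equals $u_j$, and nothing when no $\mathsf H^{(2)}$ appeared and it equals $w_j$. To make the gap strictly positive, I would use $n_j=\lceil c'/\delta_j\rceil$ with a constant $c'$ tuned so that the learner's $\Omega(1)$ exceeds the comparator's cost $2\cdot n_j\delta_j/2\cdot\Pr[\text{move}]$. This is an explicit one-variable optimisation in $c'$, and I would tune it with the exact constants of the ramp.

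\textbf{Summing against one comparator (main obstacle).} The final comparator is $w_{J+1}$. The total angular drift after epoch $j$ satisfies $\sum_{k>j}\|w_{k+1}-w_k\|\lesssim\sum_{k>j}\sqrt{\delta_k}$, which is geometric. I expect the hardest step to be showing that the extra loss of $w_{J+1}$ over $w_{j+1}$ on epoch $j$ is a small constant, say at most a tenth of the per-epoch gap. For the $\mathsf H^{(1)}_{w_j}$ rounds this is a Lipschitz estimate, since the extra loss is at most $n_j\cdot\|w_{J+1}-w_{j+1}\|$ paired with its curvature along the boundary. This is delicate: a naive Lipschitz bound gives $n_j\sqrt{\delta_{j+1}}$, which is too large. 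I would instead use that both points lie on the circle, so that $1-\ip{w_j}{w_{J+1}}$ is quadratic in the angle, and choose the ratio $\delta_{j+1}/\delta_j$ (for example $1/16$ instead of $1/4$) small enough to make the geometric sum of angles below $O(\sqrt{\delta_j})$ with a small constant.

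For the $\mathsf H^{(2)}$ rounds, which occur $O(1)$ times in expectation, I would ensure that $w_{J+1}$ stays in the half-plane where the ramp vanishes, or pays only $O(\sqrt{\delta_{j+1}/\delta_j})$ there. This holds by rotating always counterclockwise, which keeps later points beyond $u_j$ along the tangent.

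Summing the per-epoch gaps over the $J$ epochs then gives $\E[\AltReg_T]\ge \E[\AltReg_T(w_{J+1})]\ge c_0 J\ge c_0\log(1+T)$.
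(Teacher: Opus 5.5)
\textbf{Overall structure.} Your architecture matches the paper's in most respects:
\begin{itemize}
\item epochs at geometrically shrinking scales;
\item Bernoulli mixing of the affine loss $\mathsf H^{(1)}_w$ with a ramp, at rate $p=\delta/(1+\delta)$;
\item moving the reference point only if some $\mathsf H^{(2)}$ appears;
\item tuning $n_jp_j$ below $\ln 2$ so the no-move probability exceeds $1/2$ (the paper uses $n_j=\lfloor 1/(2p_j)\rfloor$);
\item controlling later drift by one-directional rotation and the curvature of the circle.
\end{itemize}

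\textbf{The gap: the single-scale gadget.} This is the crux, and it does not work as written.

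First, your ``convex cap'' takes a maximum with a dampened affine piece. A maximum can never lower any value, and capping with a minimum destroys convexity. So your $\mathsf H^{(2)}$ is never shown to lie in $[0,1]$.

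Second, and more seriously, your argument for $(1-p)\mathsf H^{(1)}_w+p\,\mathsf H^{(2)}_{w,\delta}\ge p/2$ uses convexity in the wrong direction. A convex function equal to $p/2$ at $w$ and $u$ is \emph{at most} $p/2$ on the segment $[w,u]$. So the bound can hold only if the mixture is exactly flat there. Your candidate ramps are not flat:
\begin{itemize}
\item For the ramp vanishing beyond the tangent line at $u$, i.e.\ $\frac{1}{2\delta}(1-\langle u,x\rangle)$ on $\mathcal D$, the mixture equals $\frac{1-p}{2}\bigl(2-\langle w+u,x\rangle\bigr)$. At the arc midpoint $b=(w+u)/\|w+u\|_2$ this is about $p/4$.
\item A chord-direction ramp of slope $\Theta(1/\sqrt\delta)$ gives about $3p/8$ at $b$.
\end{itemize}
This is precisely the continuous hedging between $w$ and $u$ that the construction must defeat.

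\textbf{The missing idea.} Let $\mathsf H^{(2)}$ depend on $x$ only through $\langle w,x\rangle$:
\[
\mathsf H^{(2)}_{w,\delta}(x)=\tfrac12\max\Bigl\{1-\tfrac{1-\langle w,x\rangle}{\delta},\,0\Bigr\}.
\]
This choice gives everything you need:
\begin{itemize}
\item Since $w$ maximizes $\langle w,\cdot\rangle$ over $\mathcal D$, the function lies in $[0,1/2]$ with no capping.
\item It vanishes on $\{\langle w,x\rangle\le 1-\delta\}$. That set contains $u$ and every rotation of $w$ counterclockwise by an angle in $[\arccos(1-\delta),\pi]$, which is exactly what your later-$\mathsf H^{(2)}$ step needs.
\item With $p=\delta/(1+\delta)$, the mixture is identically $p/2$ on the cap $\{1-\langle w,x\rangle\le\delta\}$, and at least $(1-p)\delta/2=p/2$ off it.
\end{itemize}

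\textbf{A secondary quantitative issue.} A scale ratio like $1/16$ is far too large. The per-epoch gap is only a small constant (about $1/40$ in the paper). Meanwhile, consider an earlier epoch $k$ in which the point moved. There the final comparator's extra doubled cost is about $n_k\beta(\alpha_k+\beta)$, where $\alpha_k=\arccos(1-\delta_k)$ and $\beta=O(\sqrt{r\delta_k})$ is the total later rotation. This is of order $\sqrt r\, n_k\delta_k$; for $r=1/16$ it is comparable to $n_k\delta_k$ itself and swamps the gap. You need a tiny universal ratio (the paper takes $r=2^{-18}$). This only changes the constant in $J=\Theta(\log T)$.
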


\begin{proof}
We first describe the multiscale construction. Fix
$
  w_1=e_1=(1,0)\in\partial\mathcal D
$
and, for $\alpha\in\R$, let
$
  M_\alpha
  \triangleq
  \begin{bmatrix}
    \cos\alpha & -\sin\alpha\\
    \sin\alpha & \cos\alpha
  \end{bmatrix}
$
be the counterclockwise rotation matrix. For $w\in\partial\mathcal D$ and $\delta\in(0,1/8]$, define two loss functions
\begin{align}
  \mathsf H^{(1)}_w(x)
  &\triangleq \frac{1-\langle w,x\rangle}{2},
  \\
  \mathsf H^{(2)}_{w,\delta}(x)
  &\triangleq
  \frac12\max\left\{
    1-\frac{1-\langle w,x\rangle}{\delta},0
  \right\}.
  \label{eq:disk-hard-losses}
\end{align}
The first function is affine, while the second is the maximum of two affine functions. Hence both are convex. Moreover, since $0\le 1-\langle w,x\rangle\le2$ on $\mathcal D$, both losses take values in $[0,1]$.

In our instance construction, we use several variables defined as follows:
\begin{align}
  \delta_1\triangleq \frac18,
  ~~r\triangleq 2^{-18},
  ~~\delta_{j+1}&\triangleq r\delta_j,~~p_j\triangleq \frac{\delta_j}{1+\delta_j},
  ~~n_j\triangleq \left\lfloor\frac{1}{2p_j}\right\rfloor,
  ~~\alpha_j\triangleq \arccos(1-\delta_j).
  \label{eq:disk-scale-parameters}
\end{align}
Let $\tau_0=0$ and
$
  \tau_j\triangleq\sum_{k=1}^j(n_k+1).
$
Epoch $j$ consists of the $n_j$ rounds
$
  \mathcal I_j\triangleq\{\tau_{j-1}+1,\ldots,\tau_{j-1}+n_j\}
$
followed by the zero-loss separator $f_{\tau_j}\equiv0$, which separates the paired-loss contributions of consecutive epochs.

Now we define our loss function at each epoch. Conditioned on the current boundary point $w_j$, the losses in epoch $\mathcal I_j$ are sampled independently according to the following probability:
\begin{align}
  f_t=
  \begin{cases}
    \mathsf H^{(2)}_{w_j,\delta_j}, & \text{with probability }p_j,\\
    \mathsf H^{(1)}_{w_j}, & \text{with probability }1-p_j.
  \end{cases}
  \label{eq:disk-epoch-distribution}
\end{align}
Let $N_j$ be the number of $\mathsf{H}^{(2)}_{w_j,\delta_j}$ loss functions in epoch $j$. At the end of the epoch, we rotate $w_j$ by $\alpha_j$ if $N_j\geq 1$ and keep the direction otherwise. Specifically, we define $w_{j+1}$ as follows:
\begin{align}
  w_{j+1}
  \triangleq
  \begin{cases}
    w_j, & N_j=0,\\
    M_{\alpha_j}w_j, & N_j\ge1.
  \end{cases}
  \label{eq:disk-center-update}
\end{align}
All Bernoulli variables used in \eqref{eq:disk-epoch-distribution} can be sampled before the interaction begins. Once these variables are fixed, the entire sequence $w_1,w_2,\ldots$ and hence the entire loss sequence are fixed as well. Thus the resulting random environment is oblivious.

We next compare the learner's loss with the reference points generated by this construction. For epoch $j$, define the learner's paired loss
$L_j\triangleq\sum_{t\in\mathcal I_j}
  \bigl(f_t(x_t)+f_t(x_{t+1})\bigr)
$
and for any $v\in\mathcal D$, define its (doubled) comparator loss on this epoch by $C_j(v) \triangleq
  2\sum_{t\in\mathcal I_j}f_t(v).$
The zero-loss separator contributes to neither quantity.

Fix an epoch $j$ and condition on the history before this epoch. For any $\cF_{t-1}$-measurable action $x_t\in\mathcal D$, let
$
  s=1-\langle w_j,x_t\rangle.
$
If $s\le\delta_j$, then the choice $p_j=\delta_j/(1+\delta_j)$ gives that
\begin{align*}
  \E[f_t(x_t)\mid\cF_{t-1}]
  =p_j\mathsf H^{(2)}_{w_j,\delta_j}(x_t)
    +(1-p_j)\mathsf H^{(1)}_{w_j}(x_t)=\frac{p_j}{2}
    +\frac{s}{2}\left(1-p_j-\frac{p_j}{\delta_j}\right)
  =\frac{p_j}{2}.
\end{align*}
If $s\ge\delta_j$, then the $\mathsf H^{(1)}_{w_j}$ loss alone gives
$
  \E[f_t(x_t)\mid\cF_{t-1}]
  \ge \frac{(1-p_j)\delta_j}{2}
  =\frac{p_j}{2}.
$
Since every loss is nonnegative, $f_t(x_{t+1})\ge0$. Summing over the epoch yields
\begin{align}
  \E[L_j]\ge\frac{n_jp_j}{2}.
  \label{eq:disk-learner-epoch-loss}
\end{align}

Now define the reference point after epoch $j$ by
$
  v_j\triangleq w_{j+1}.
$
If $N_j=0$, then $v_j=w_j$ and every loss in the epoch is $\mathsf H^{(1)}_{w_j}$, so $C_j(v_j)=0$. If $N_j\ge1$, then
$
  v_j=M_{\alpha_j}w_j
$
and, by the definition of $\alpha_j$, we know that
\begin{align*}
  1-\langle w_j,v_j\rangle
  =1-\cos\alpha_j
  =\delta_j.
\end{align*}
Hence $\mathsf{H}^{(2)}_{w_j,\delta_j}(v_j)=0$, while each $\mathsf{H}^{(1)}_{w_j}(v_j)$ contributes $\delta_j$ to the doubled comparator loss. Therefore, pathwise, we have that
\begin{align}
  C_j(v_j)
  =(n_j-N_j)\delta_j\mathbf 1\{N_j\ge1\}.
  \label{eq:disk-reference-own-epoch}
\end{align}
Let
$
  a_j\triangleq(1-p_j)^{n_j}.
$ Since $N_j\sim\mathrm{Binomial}(n_j,p_j)$, \eqref{eq:disk-reference-own-epoch} shows that
\begin{align*}
  \E[C_j(v_j)]
  &= \E[n_j\delta_j(1-a_j) - \delta_jN_j] = n_j\delta_j\bigl((1-p_j)-a_j\bigr).
\end{align*}
Combining the above identity with \eqref{eq:disk-learner-epoch-loss} and using $\delta_j=p_j/(1-p_j)$ gives
\begin{align}
  \E[L_j-C_j(v_j)]\geq \frac{n_jp_j}{2}-n_j\delta_j\bigl((1-p_j)-a_j\bigr)
  =
  \frac{n_jp_j}{2}
  \left(\frac{2a_j}{1-p_j}-1\right).
  \label{eq:disk-current-reference-gap}
\end{align}
Because $\delta_j\le1/8$, we have $p_j\le1/9$. Moreover,
\begin{align*}
  n_jp_j
  &=p_j\left\lfloor\frac{1}{2p_j}\right\rfloor
  \in\left[\frac12-p_j,\frac12\right]
  \subseteq\left[\frac7{18},\frac12\right].
\end{align*}
Also, using $n_j\le1/(2p_j)$ and
$
  \log(1-p)\ge-p/(1-p)
$
for $p\in(0,1)$, we know that
\begin{align*}
  a_j
  =(1-p_j)^{n_j}
  \ge(1-p_j)^{1/(2p_j)}
  \ge\exp\left(-\frac{1}{2(1-p_j)}\right)
  \ge e^{-9/16}.
\end{align*}
Consequently, for any $j\geq 1$, 
\begin{align}
  \E[L_j-C_j(v_j)]
  \ge
  \frac7{36}\bigl(2e^{-9/16}-1\bigr)
  >\frac1{40}.
  \label{eq:disk-current-reference-constant-gap}
\end{align}
Thus, in every epoch, the learner's expected paired loss exceeds the loss of the reference point $v_j$ by a universal constant.

Next, we bound the expected loss of the reference point $v_j$ in each epoch $k$ that is before $j$. Fix $k\le j$ and define the angle between $v_k$ and $v_j$ to be $\beta_{k\leftrightarrow j}$. Since every update in \eqref{eq:disk-center-update} is a forward rotation, $\beta_{k\leftrightarrow j}$ is at most
\begin{align}\label{eqn:beta}
  \beta_{k\leftrightarrow j}\leq
  \sum_{\ell=k+1}^{\infty}\alpha_\ell
  =\sum_{\ell=k+1}^{\infty}\arccos(1-\delta_\ell)
  \leq \sum_{\ell=k+1}^{\infty}\sqrt{3\delta_\ell}
  =\frac{\sqrt{3r\delta_k}}{1-\sqrt r}
  \leq \frac{4}{3}\sqrt{3r\delta_k},
\end{align}
where the first equality is by the definition of $\alpha_\ell$, the first inequality uses the elementary bound
$\arccos(1-\delta)\le\sqrt{3\delta}$ for $\delta\in(0,\frac18]$, the second equality follows from
$\delta_\ell=r^{\ell-k}\delta_k$, and the final inequality uses
$1/(1-\sqrt r)\le4/3$, which holds since $r=2^{-18}$.

We now bound how much the later movement from $v_k$ to $v_j$ can increase the $v_j$'s loss on epoch $k$. If $N_k=0$, then $v_k=w_k$ and all losses in the epoch are $\mathsf{H}^{(1)}_{w_k}$. Therefore, using the fact that $1-\cos z\le z^2/2$ gives that
\begin{align}
  C_k(v_j)-C_k(v_k)
  =2n_k\cdot \left(\mathsf{H}^{(1)}_{w_k}(v_j) - \mathsf{H}^{(1)}_{w_k}(v_k)\right)
  =n_k(1-\cos\beta_{k\leftrightarrow j})
  \le\frac{n_k\beta_{k\leftrightarrow j}^2}{2}\leq \frac{8rn_k\delta_k}{3}
  \le\frac{3}{2} r,
  \label{eq:disk-link-no-cap}
\end{align}
where the third inequality uses \eqref{eqn:beta} and the last inequality is because 
$n_k
  \le\frac{9}{16\delta_k}$ as $
  n_k=\lfloor(1+\delta_k)/(2\delta_k)\rfloor
$
and $\delta_k\le1/8$.

If $N_k\ge1$, then $v_k=M_{\alpha_k}w_k$. Since all subsequent rotations are in the same direction and
$\alpha_k+\beta_{k\leftrightarrow j}<\pi$, we know that
\begin{align*}
    1-\inner{w_k}{v_j}\geq 1-\inner{w_k}{v_k} =\delta_k,
\end{align*}
meaning that every $\mathsf{H}^{(2)}_{w_k,\delta_k}$ loss function remains zero at $v_j$.
Here, $\alpha_k+\beta_{k\leftrightarrow j}<\pi$ follows from
$\alpha_k\le\sqrt{3\delta_k}$, \eqref{eqn:beta}, $\delta_k\le1/8$, and $r=2^{-18}$. Thus only the $\mathsf{H}^{(1)}_{w_k}$ losses can increase. Using the fact that
$
  \cos\alpha-\cos(\alpha+\beta)
  \le\beta(\alpha+\beta)
$
for $\alpha,\beta\ge0$, together with \eqref{eqn:beta}, we obtain that
\begin{align}
  C_k(v_j)-C_k(v_k)
  &=(n_k-N_k)\inner{w_k}{v_k-v_j}\nonumber\\
  &\le
  n_k\bigl(\cos\alpha_k-\cos(\alpha_k+\beta_{k\leftrightarrow j})\bigr)\nonumber\\
  &\le n_k\beta_{k\leftrightarrow j}(\alpha_k+\beta_{k\leftrightarrow j})
  \le\frac{9}{2}\sqrt r,
  \label{eq:disk-link-cap}
\end{align}
where the last inequality uses
$\alpha_k\le\sqrt{3\delta_k}$,
$n_k\le 9/(16\delta_k)$, $r=2^{-18}$, 
and \eqref{eqn:beta}. Combining \eqref{eq:disk-link-no-cap} and \eqref{eq:disk-link-cap}, and again using $r=2^{-18}$, gives the pathwise bound
\begin{align}
  C_k(v_j)-C_k(v_k)
  \le5\sqrt r
  =\frac5{512}
  <\frac1{80}.
  \label{eq:disk-linking-cost}
\end{align}
Further combining \eqref{eq:disk-current-reference-constant-gap} with \eqref{eq:disk-linking-cost} shows that for every $k\le j$,
\begin{align}
  \E[L_k-C_k(v_j)]
  \ge\E[L_k-C_k(v_k)] + \E[C_k(v_k)-C_k(v_j)]\geq \frac{1}{40}-\frac{1}{80} = \frac{1}{80}.
  \label{eq:disk-final-reference-gap}
\end{align}

We now relate this calculation to the actual hindsight optimizer for the entire prefix. As
\begin{align*}
  2\min_{u\in\mathcal{D}}\sum_{t=1}^{\tau_j}f_t(u)
  \le
  2\sum_{t=1}^{\tau_j}f_t(v_j)
  =\sum_{k=1}^j C_k(v_j),
\end{align*}
and the learner's paired loss through round $\tau_j$ is $\sum_{k=1}^jL_k$ because all separators are zero, we know that
\begin{align}
  \AltReg_{\tau_j}
  &\ge
  \sum_{k=1}^j\bigl(L_k-C_k(v_j)\bigr).
  \label{eq:disk-prefix-regret}
\end{align}
Taking expectations and applying \eqref{eq:disk-final-reference-gap} yields
\begin{align}
  \E[\AltReg_{\tau_j}]
  \ge\frac{j}{80}.
  \label{eq:disk-j-epochs-regret}
\end{align}
It remains to count how many complete epochs fit in the horizon. From the
definition of $n_k$ and the fact that $\delta_k\le1/8$, we have
$n_k+1\le\delta_k^{-1}$. Hence, the total length of the first $j$ epochs
satisfies
\begin{align*}
  \tau_j
  =\sum_{k=1}^j(n_k+1)
  \le\sum_{k=1}^j\delta_k^{-1}
  \le\frac{2}{\delta_j},
\end{align*}
where the last inequality uses $\delta_k=\delta_jr^{k-j}$ and $r<1/2$.

In particular, since $\tau_1=5$, for every $T\ge5$, let $J$ be the largest index satisfying $\tau_J\le T$. Then, we have
\begin{align*}
  T
  <\tau_{J+1}
  \le\frac{2}{\delta_{J+1}}
  =16r^{-J}.
\end{align*}
Since $J\ge1$, we have $r^{-J}\ge1$, and therefore
$
  1+T
  <1+16r^{-J}
  \le17r^{-J}.$
Taking logarithms and again using $J\ge1$ gives
\begin{align}
  \log(1+T)
  &\le \log 17+J\log(1/r)
  \le J\log(17/r),
  \label{eq:disk-count-scales}
\end{align}
meaning that $
  J\ge\frac{\log(1+T)}{\log(17/r)}.
$

We run these $J$ complete epochs and set all remaining losses to zero. Since the additional zero losses do not change the alternating regret, \eqref{eq:disk-j-epochs-regret} and \eqref{eq:disk-count-scales} imply that
\begin{align*}
  \E[\AltReg_T]
  \ge
  \frac{J}{80}
  \ge
  \frac{1}{80\log(17/r)}
  \log(1+T).
\end{align*}
\end{proof}

We are now ready to prove Theorem~\ref{thm:oco-lower}. The proof lifts
Lemma~\ref{lem:oco-disk-lower} to higher dimensions by a direct-product
construction. For horizons that are too short to invoke the disk lemma on
every two-dimensional coordinate block, we use a separate coordinatewise
construction.

\begin{proof}[Proof of Theorem~\ref{thm:oco-lower}]
Fix $d\ge2$ and $T\ge1$, and let
$
  m\triangleq\lfloor d/2\rfloor.
$
We first construct an oblivious distribution over loss sequences and then
use an averaging argument to obtain a single deterministic loss sequence.
We consider two regimes.

\paragraph{Long-horizon regime: $T\ge5m$.}
Recall that
$
  \mathcal D=\{z\in\R^2:\|z\|_2\le1\}.
$
We choose the $d$-dimensional set as follows
\begin{align*}
  \cX
  =
  \begin{cases}
    \mathcal D^m, & d=2m,\\
    \mathcal D^m\times[-1,1], & d=2m+1.
  \end{cases}
\end{align*}
Equivalently, for every $x\in\cX$ and $k\in[m]$, the coordinate pair
$
  (x_{2k-1},x_{2k})
$
belongs to $\mathcal D$; when $d$ is odd, the last coordinate $x_d$
belongs to $[-1,1]$ and will not be used.

Let
$
  H\triangleq\lfloor T/m\rfloor.
$
Since $T\ge5m$, we have $H\ge5$. Therefore, by
Lemma~\ref{lem:oco-disk-lower}, there exists an oblivious distribution
$\mathsf P_H$ over sequences
$
  (g_1,\ldots,g_H)
$
of convex loss functions $g_s:\mathcal D\to[0,1]$ such that every
learner on $\mathcal D$ satisfies
\begin{align}
  \E_{g_{1:H}\sim\mathsf P_H}
  \left[
    \sum_{s=1}^H
    \bigl(g_s(z_s)+g_s(z_{s+1})\bigr)
    -
    2\min_{z\in\mathcal D}\sum_{s=1}^H g_s(z)
  \right]
  \ge c_0\log(1+H).
  \label{eq:oco-disk-instance-H}
\end{align}
Fix one such distribution $\mathsf P_H$. We then partition the first $mH$ rounds into $m$ consecutive segments, where
segment $k\in[m]$ consists of rounds
\begin{align*}
  \mathcal J_k
  \triangleq
  \{(k-1)H+1,\ldots,kH\}.
\end{align*}
Independently for every $k\in[m]$, sample a sequence
$
  (g_{k,1},\ldots,g_{k,H})\sim\mathsf P_H,
$
where each $g_{k,s}:\mathcal D\to[0,1]$. All these sequences are sampled before the interaction begins without the knowledge of the learner's decision. For the $s$-th round of segment $k$, namely $t=(k-1)H+s$, define
\begin{align}
  f_t(x)
  \triangleq
  g_{k,s}\bigl(x_{2k-1},x_{2k}\bigr).
  \label{eq:oco-product-loss}
\end{align}
For all remaining rounds $t>mH$, set $f_t\equiv0$. Since
$(x_{2k-1},x_{2k})\in\mathcal D$ for every $x\in\cX$, each $f_t$ is
well defined, convex, and takes values in $[0,1]$.

We next show that the alternating regret decomposes across the $m$
segments. For every $k\in[m]$ and $s\in[H+1]$, define
\begin{align*}
  z_{k,s}
  \triangleq
  \bigl(
    x_{(k-1)H+s,\,2k-1},
    x_{(k-1)H+s,\,2k}
  \bigr)
  \in\mathcal D.
\end{align*}
Thus $z_{k,s}$ is the pair of coordinates $(2k-1,2k)$ used by the learner
at the corresponding time. For each realized sequence, let
\begin{align*}
  z_k^\star
  \in
  \arg\min_{z\in\mathcal D}
  \sum_{s=1}^H g_{k,s}(z).
\end{align*}
Define a single comparator $u^\star\in\cX$ where $(u^\star_{2k-1},u^\star_{2k})
  =z_k^\star$ for all $k\in[m]$ and, when $d$ is odd, set $u^\star_d=0$. Since the losses in segment $k$
depend only on coordinates $(2k-1,2k)$, the minimization over the product
domain separates, and hence we have
\begin{align}
  \min_{u\in\cX}\sum_{t=1}^T f_t(u)
  =
  \sum_{k=1}^m
  \min_{z\in\mathcal D}\sum_{s=1}^H g_{k,s}(z).
  \label{eq:oco-product-comparator}
\end{align}
Together with \eqref{eq:oco-product-loss}, this gives
\begin{align}
  \AltReg_T
  =
  \sum_{k=1}^m
  \left[
    \sum_{s=1}^H
    \bigl(
      g_{k,s}(z_{k,s})
      +
      g_{k,s}(z_{k,s+1})
    \bigr)
    -
    2\min_{z\in\mathcal D}
    \sum_{s=1}^H g_{k,s}(z)
  \right],
  \label{eq:oco-product-regret-decomposition}
\end{align}
where the losses after round $mH$ are zero and therefore make no
contribution.

To lower bound the expectation of each term in
\eqref{eq:oco-product-regret-decomposition}, fix any segment $k$ and condition on the information available before this segment, namely $\cF_{(k-1)H}$. During segment $k$, $(g_{k,1},\ldots,g_{k,H})$ is independent of $\cF_{(k-1)H}$ and is distributed according to $\mathsf P_H$. Therefore,
Lemma~\ref{lem:oco-disk-lower}, applied conditionally on
$\cF_{(k-1)H}$, yields
\begin{align*}
  \E\left[
    \left.
    \sum_{s=1}^H
    \bigl(
      g_{k,s}(z_{k,s})
      +
      g_{k,s}(z_{k,s+1})
    \bigr)
    -
    2\min_{z\in\mathcal D}
    \sum_{s=1}^H g_{k,s}(z)
    \,\right|\,
    \cF_{(k-1)H}
  \right]
  \ge
  c_0\log(1+H).
\end{align*}
Taking expectations and summing over $k\in[m]$ in
\eqref{eq:oco-product-regret-decomposition} gives
\begin{align}
  \E[\AltReg_T]
  \ge
  c_0m\log(1+H).
  \label{eq:oco-product-intermediate}
\end{align}
Finally, for every $d\ge2$, we have
$
  m=\lfloor d/2\rfloor\ge d/3.
$
Since $T/m\ge5$, we have
$
  H
  =\left\lfloor\frac{T}{m}\right\rfloor
  \ge\frac{T}{2m}
  \ge\frac{T}{d}.
$ Combining this with
\eqref{eq:oco-product-intermediate} yields
$
  \E[\AltReg_T]
  \ge
  \frac{c_0}{3}
  d\log\left(1+\frac{T}{d}\right).
$

\paragraph{Short-horizon regime: $T<5m$.} When $T<5m$, the instance construction is from the standard OCO lower bound. 
Choose $\cX=[-1,1]^d$ and let $K\triangleq\min\{T,d\}$. Draw
independent Rademacher signs $\sigma_1,\ldots,\sigma_K$ before play and define
$
  f_t(x)
  \triangleq
  \frac{1+\sigma_t x^{(t)}}{2}$ for all $t\in[K]$, where $x^{(t)}$ denotes the $t$-th coordinate of $x$, distinguishing it from the iterate $x_t$, and $f_t\equiv0$ for $t>K$.
  
  Since $x_t$ is chosen before $\sigma_t$ is revealed, we have
$
  \E[f_t(x_t)\mid\cF_{t-1}]=1/2,
$
while $f_t(x_{t+1})\ge0$. On the other hand, define the fixed comparator $u^\star\in[-1,1]^d$ as $u^\star_i=-\sigma_i$
for $i\in[K]$ and $u^\star_i=0$ for $i>K$. In this case, $f_t(u^\star)+f_{t+1}(u^\star)=0$ for all $t$. Hence $ \E[\AltReg_T]\ge\frac{K}{2}.$ Moreover, as $T<5m\le5d/2$, we have
\begin{align*}
  K
  =d\min\left\{\frac{T}{d},1\right\}
  \ge
  \frac{d}{\log(7/2)}
  \log\left(1+\frac{T}{d}\right),
\end{align*}
where we use
$
  \min\{x,1\}\ge\log(1+x)/\log(7/2)
$
for $x\in[0,5/2]$. This finishes the proof.

Thus, in both regimes, there exists a universal constant
$
  c\triangleq
  \min\left\{
    \frac{c_0}{3},
    \frac{1}{2\log(7/2)}
  \right\}>0
$
and an oblivious distribution over deterministic loss sequences such that
$
  \E[\AltReg_T]
  \ge
  cd\log\left(1+\frac{T}{d}\right).
$
Since the expectation is an average over deterministic loss sequences, for the fixed learner there exists at least one sequence in the support whose expected alternating regret is at least the same quantity. This finishes the proof.
\end{proof}

\section{Omitted details in Section~\ref{sec:oco-upper}}
\label{app:oco-upper-details}

This appendix verifies the feasibility of the continuous correction and
proves the two auxiliary lemmas used in the proof of
Theorem~\ref{thm:oco-upper}. The corrected-potential calculation is the
continuous counterpart of Proposition~\ref{prop:parametric-corrected-potential}.
We therefore reuse that calculation whenever possible and only verify the
additional analytic step needed to pass from finite expectations to
integrals.

We first verify that the corrected density in
\eqref{eq:continuous-played-density} is well defined and record the rejection
sampling representation discussed in Section~\ref{sec:oco-upper}.

\begin{proposition}
\label{prop:continuous-feasibility}
For $\eta=1/10$, every density $p_t$ defined in
\eqref{eq:continuous-played-density} belongs to $\Delta_{\cX}$. Moreover, if
$
\Gamma_t(x)\triangleq
1+4\eta s_t Z_{t-1}(x)
-2\eta^2\bigl(Z_{t-1}(x)^2-H_{t-1}\bigr),
$
then $3/25\le\Gamma_t(x)<2$ for every $x\in\cX$.
\end{proposition}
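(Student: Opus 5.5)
The plan mirrors Proposition~\ref{prop:expert-feasibility}, with sums replaced by integrals. The argument has three steps.

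\textbf{Step 1: bound the centered signed loss.} Since $f_{t-1}:\cX\to[-1,1]$ and $y_{t-1}=s_{t-1}f_{t-1}$ (or $y_0=0$), we have $y_{t-1}(x)\in[-1,1]$ for every $x$. The density $q_{t-1}=q[\eta R_{t-1}]$ is well defined, because $R_{t-1}$ is a finite sum of bounded functions and so $\exp(\eta R_{t-1})$ is bounded above and below by positive constants on a set of positive finite volume. Hence $\mu_{t-1}\in[-1,1]$, which gives $|Z_{t-1}(x)|\le 2$. Measurability of $y_{t-1}$ follows from convexity: convex functions are continuous on the interior, and the boundary of a convex body has measure zero. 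Consequently all integrals are finite.

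\textbf{Step 2: bound the variance term.} I would bound $H_{t-1}$ by $0\le H_{t-1}=\Var_{q_{t-1}}(y_{t-1})\le 1$. The upper bound uses Popoviciu's inequality for a variable in $[-1,1]$, or equivalently $H_{t-1}\le\E_{q_{t-1}}[y_{t-1}^2]\le 1$.

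\textbf{Step 3: bound $\Gamma_t$ pointwise.} With $\eta=1/10$:
\begin{itemize}
\item \emph{Lower bound.} We have $4\eta s_tZ_{t-1}\ge -8\eta$, $-2\eta^2Z_{t-1}^2\ge -8\eta^2$, and $2\eta^2H_{t-1}\ge0$. So $\Gamma_t\ge 1-8\eta-8\eta^2=1-0.8-0.08=3/25$.
\item \emph{Upper bound.} We have $4\eta s_tZ_{t-1}\le 8\eta=0.8$, $-2\eta^2Z_{t-1}^2\le 0$, and $2\eta^2H_{t-1}\le 2\eta^2=0.02$. So $\Gamma_t\le 1.82<2$.
\end{itemize}

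\textbf{Step 4: nonnegativity and normalization.} Nonnegativity follows from $p_t=q_{t-1}\Gamma_t\ge0$. For normalization, integrate against $q_{t-1}$:
\begin{itemize}
\item $\int q_{t-1}=1$;
\item $\int q_{t-1}Z_{t-1}=\mu_{t-1}-\mu_{t-1}=0$;
\item $\int q_{t-1}(Z_{t-1}^2-H_{t-1})=H_{t-1}-H_{t-1}=0$.
\end{itemize}
Therefore $\int_\cX p_t=1$, and $p_t\in\Delta_\cX$. The terminal round $t=T+1$ is handled identically, using $(q_T,y_T)$.

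The only mildly delicate step is the integrability and measurability bookkeeping in Step 1. Everything else is the arithmetic already done in the expert case.
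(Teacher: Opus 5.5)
Your proposal is correct and is essentially the paper's proof: the paper also uses that both correction terms are centered under $q_{t-1}$ to get $\int_{\cX}p_t(x)\,dx=1$, and the bounds $|Z_{t-1}(x)|\le 2$ and $0\le H_{t-1}\le 1$ to obtain $3/25\le\Gamma_t(x)\le 91/50<2$. Your measurability and integrability remarks in Step 1 are a harmless, slightly more careful addition that the paper leaves implicit.
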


\begin{proof}
By the definitions of $\mu_{t-1}$ and $H_{t-1}$,
$\int_{\cX}Z_{t-1}(x)q_{t-1}(x)\,dx=0$ and
$\int_{\cX}(Z_{t-1}(x)^2-H_{t-1})q_{t-1}(x)\,dx=0$. Hence
$\int_{\cX}p_t(x)\,dx=1$. Since $y_{t-1}(x)\in[-1,1]$,
$|Z_{t-1}(x)|\le2$ and $H_{t-1}\le1$. Thus, at $\eta=1/10$,
$\Gamma_t(x)\ge1-8\eta-8\eta^2=3/25>0$ and
$\Gamma_t(x)\le1+8\eta+2\eta^2=91/50<2$. Therefore $p_t$ is a probability
density.
\end{proof}

\begin{remark}
\label{rem:continuous-rejection}
Proposition~\ref{prop:continuous-feasibility} also gives a simple way to
sample from $p_t$. Recall that
$
  p_t(x)=q_{t-1}(x)\Gamma_t(x)
$
and $0<\Gamma_t(x)<2$. Given the exact values of $\mu_{t-1}$ and
$H_{t-1}$, draw $X\sim q_{t-1}$ and accept it with probability
$\Gamma_t(X)/2$. The density of an accepted proposal is proportional to
$
  q_{t-1}(x)\Gamma_t(x)/2=p_t(x)/2.
$
Moreover,
$
  \E_{q_{t-1}}[\Gamma_t]=\int_{\cX}p_t(x)\,dx=1,
$
so the overall acceptance probability is $1/2$. Therefore, conditioned on
acceptance, $X$ has density $p_t$, and an exact sample from $p_t$ requires
two proposals from $q_{t-1}$ in expectation.
\end{remark}

We next prove Lemma~\ref{lem:continuous-corrected-potential}. The only new
step relative to Appendix~\ref{app:expert-potential} is to verify the
moving-expectation identity for the continuous exponential-weights density.
Once this identity is established, the derivative calculation in
Proposition~\ref{prop:parametric-corrected-potential} applies without
change.

\begin{proof}[Proof of Lemma~\ref{lem:continuous-corrected-potential}]
Fix $t\in[T+1]$ and define
$R(\theta)\triangleq R_{t-1}-\theta h_t$ and
$y(\theta)\triangleq y_{t-1}+\theta s_t h_t$ for $\theta\in[0,1]$.
Let $q_\theta\triangleq q[\eta R(\theta)]$. Since $R(\theta)$, $h_t$, and $y(\theta)$ are uniformly bounded over
$\theta\in[0,1]$ and $x\in\cX$, differentiation under the integral is
justified. Differentiating the exponential-weights density gives
\begin{align*}
\frac{\partial}{\partial\theta}q_\theta(x)
&=
-\eta q_\theta(x)
\left(
h_t(x)-\E_{q_\theta}[h_t]
\right).
\end{align*}
Consequently, for any bounded differentiable family $\phi_\theta$,
\begin{align}
\frac{\mathrm d}{\mathrm d\theta}
\E_{q_\theta}[\phi_\theta]
&=
\E_{q_\theta}[\partial_\theta\phi_\theta]
-\eta
\E_{q_\theta}\left[
\left(
h_t-\E_{q_\theta}[h_t]
\right)\phi_\theta
\right].
\label{eq:continuous-moving-expectation}
\end{align}

Equation~\eqref{eq:continuous-moving-expectation} is exactly the continuous
counterpart of \eqref{eq:moving-expectation}. To make the correspondence
with Proposition~\ref{prop:parametric-corrected-potential} explicit, the
objects in that proposition are replaced as follows:
$z$ is replaced by the function $h_t$, $q[\eta R(\theta)]$ is replaced by
the density $q_\theta$, finite expectations are replaced by integrals under
$q_\theta$, and
$\widetilde z_i(\theta)$ is replaced by
$
\widetilde h_\theta(x)
\triangleq
h_t(x)-\E_{q_\theta}[h_t].
$
Similarly, define
$
\widetilde y_\theta(x)
\triangleq
y(\theta,x)-\E_{q_\theta}[y(\theta)].
$

Every subsequent step in the proof of
Proposition~\ref{prop:parametric-corrected-potential} uses only the
moving-expectation identity, differentiation of the affine interpolation,
and algebraic identities for centered quantities. Therefore the same
calculation gives, for
$
G_\alpha(\theta)
\triangleq
F(R(\theta))
-\alpha\eta\Var_{q_\theta}(y(\theta)),
$
\begin{align*}
G_\alpha'(\theta)
&=
-\E_{q_\theta}[h_t]
-2\alpha\eta s_t
\E_{q_\theta}[
  \widetilde h_\theta\widetilde y_\theta
]
+\alpha\eta^2
\E_{q_\theta}[
  \widetilde h_\theta\widetilde y_\theta^2
],
\end{align*}
and
\begin{align*}
G_\alpha''(\theta)
&\le
\left[
(1-2\alpha)\eta
+8\alpha\eta^2
+3\alpha\eta^3
\right]
\Var_{q_\theta}(h_t).
\end{align*}
These are precisely
\eqref{eq:G-alpha-first-derivative} and
\eqref{eq:G-alpha-curvature-bound}, respectively, with the above
continuous substitutions. Setting $\alpha=2$ and $\eta=1/10$ gives
$G_2''(\theta)\le-(67/500)\Var_{q_\theta}(h_t)\le0$, so $G_2$ is concave.

The interpolation also has the required endpoints. Indeed,
$R(1)=R_{t-1}-h_t=R_t$, and, using $s_{t-1}=-s_t$,
$
y(1)
=
y_{t-1}+s_t h_t
=
s_{t-1}f_{t-1}+s_t(f_{t-1}+f_t)
=
s_tf_t
=
y_t.
$
The same identities hold for $t=T+1$ under the endpoint convention in
Section~\ref{sec:oco-upper}.

It remains to identify the left derivative. Setting $\alpha=2$ in the
first-derivative formula above gives
\begin{align*}
G_2'(\theta)
&=
-\E_{q_\theta}[h_t]
-4\eta s_t
\E_{q_\theta}[
  \widetilde h_\theta\widetilde y_\theta
]
+2\eta^2
\E_{q_\theta}[
  \widetilde h_\theta\widetilde y_\theta^2
].
\end{align*}
As in the last step of the proof of
Lemma~\ref{lem:correction-potential-concavity}, collecting the coefficient
of $h_t(x)$ rewrites this derivative as
\begin{align*}
G_2'(\theta)
&=
-\int_{\cX}
q_\theta(x)
\left[
1
+4\eta s_t\widetilde y_\theta(x)
-2\eta^2
\left(
\widetilde y_\theta(x)^2
-\E_{q_\theta}[\widetilde y_\theta^2]
\right)
\right]
h_t(x)\,dx.
\end{align*}
At $\theta=0$, we have $q_\theta=q_{t-1}$,
$\widetilde y_\theta(x)=Z_{t-1}(x)$, and
$\E_{q_{t-1}}[\widetilde y_\theta^2]=H_{t-1}$. Hence, by
\eqref{eq:continuous-played-density},
$
G_2'(0)
=
-\int_{\cX}p_t(x)h_t(x)\,dx.
$
Concavity therefore gives
$
G_2(1)-G_2(0)\le G_2'(0),
$
which is exactly
\eqref{eq:continuous-corrected-potential-inequality}.
\end{proof}

Finally, we prove Lemma~\ref{lem:continuous-comparator-bound}. Unlike the
corrected-potential calculation above, this step is genuinely new in the
continuous setting. A single comparator has zero volume, so we compare it
with a scaled copy of the domain around it.

\begin{proof}[Proof of Lemma~\ref{lem:continuous-comparator-bound}]
Fix $u\in\cX$ and $\delta\in(0,1)$, and let
$A_u\triangleq(1-\delta)u+\delta\cX$. Since $\cX$ is $d$-dimensional,
$\vol(A_u)=\delta^d\vol(\cX)$. For any $a\in\cX$ and $x=(1-\delta)u+\delta a\in A_u$, convexity and the range
$f_t\in[-1,1]$ give
$f_t(x)\le(1-\delta)f_t(u)+\delta f_t(a)\le f_t(u)+2\delta$.
Because $R_{T+1}=-2\sum_{t=1}^T f_t$, it follows that
$R_{T+1}(x)\ge R_{T+1}(u)-4T\delta$ for every $x\in A_u$.

Restricting the log-partition integral to $A_u$ gives
\begin{align*}
F(R_{T+1})
\ge
\frac{1}{\eta}
\log\left(
\frac{\vol(A_u)}{\vol(\cX)}
\exp\bigl(
\eta(R_{T+1}(u)-4T\delta)
\bigr)
\right)=
R_{T+1}(u)
-4T\delta
-\frac d\eta\log\frac1\delta,
\end{align*}
which proves \eqref{eq:continuous-comparator-bound}.
\end{proof}

\end{document}